\newtheorem{lemma}{Lemma}
\newtheorem{theorem}{Theorem}
\newtheorem{corollary}{Corollary}
\newtheorem{assumption}{Assumption}
\newcommand{\C}{\mathcal{C}}
\newcommand{\Ci}{\mathcal{C}_i}
\begin{document}

\title{FedAPA: Federated Learning with Adaptive Prototype Aggregation Toward Heterogeneous Wi-Fi CSI-based Crowd Counting}

\author{Jingtao Guo,~\IEEEmembership{Student Member,~IEEE}, Yuyi Mao,~\IEEEmembership{Senior Member,~IEEE}, \\ and Ivan Wang-Hei Ho,~\IEEEmembership{Senior Member,~IEEE}
\thanks{}
\thanks{}}

\markboth{Journal}%
{Shell \MakeLowercase{\textit{et al.}}: A Sample Article Using IEEEtran.cls for IEEE Journals}


\maketitle

\begin{abstract}
Wi-Fi channel state information (CSI)-based sensing provides a non-invasive, device-free approach for tasks such as human activity recognition and crowd counting, but large-scale deployment is hindered by the need for extensive site-specific training data. Federated learning (FL) offers a way to avoid raw data sharing but is challenged by heterogeneous sensing data and device resources. This paper proposes FedAPA, a collaborative Wi-Fi CSI-based sensing algorithm that uses adaptive prototype aggregation (APA) strategy to assign similarity-based weights to peer prototypes, enabling adaptive client contributions and yielding a personalized global prototype for each client instead of a fixed-weight aggregation. During local training, we adopt a hybrid objective that combines classification learning with representation contrastive learning to align local and global knowledge. We provide a convergence analysis of FedAPA and evaluate it in a real-world distributed Wi-Fi crowd counting scenario with six environments and up to 20 people. The results show that our method outperform multiple baselines in terms of accuracy, F1 score, mean absolute error (MAE), and communication overhead, with FedAPA achieving at least a 9.65\% increase in accuracy, a 9\% gain in F1 score, a 0.29 reduction in MAE, and a 95.94\% reduction in communication overhead.
\end{abstract}

\begin{IEEEkeywords}
Federated learning (FL), Wi-Fi sensing, crowd counting, representation learning, device heterogeneity.
\end{IEEEkeywords}

\section{Introduction}
\IEEEPARstart{W}{i-Fi} sensing technologies have seen widespread use
in various fields, including indoor positioning~\cite{guo2023fedpos}
and crowd counting~\cite{guo2025rssi}. Among these applications, crowd counting is crucial for optimizing space allocation in restaurants, transportation hubs, and public facilities, and for improving energy efficiency through automatic control of air conditioning and electrical systems. Current crowd counting methods fall into two categories: device-based and device-free approaches~\cite{pcsurvey}. Device-based methods require each individual in the crowd to carry a sensor~\cite{weppner2013bluetooth}, which is often inconvenient or infeasible. Device-free methods are further divided into sensor-based, image-based, and radio frequency (RF)-based counting. Sensor-based methods (e.g., $\text{CO}_2$, acoustic, LiDAR, infrared) require additional infrastructure, increasing deployment costs. Image-based techniques, although widely used~\cite{gao2020cnn}, raise privacy concerns and are sensitive to non-line-of-sight (NLOS) and lighting conditions, which limits their applicability in many indoor and outdoor scenarios~\cite{xiao2022survey}. RF-based systems, using signals such as Bluetooth and Wi-Fi, offer a contactless and NLOS-capable alternative, with Wi-Fi-based methods being particularly attractive due to their low cost and widespread availability~\cite{wifivision}.

Wi-Fi signal scattering and reflection vary with the number of people in an indoor space, forming useful patterns for crowd counting and occupancy detection. The two main radio signal features used for human activity sensing are the received signal strength indicator (RSSI) and channel state information (CSI) matrix~\cite{pcsurvey, xiao2022survey}. While RSSI-based crowd counting techniques have shown potential \cite{depatla2015occupancy}, they fail to meet the robustness and scalability requirements for accurate real-world occupancy measurement due to distortion caused by multi-path fading~\cite{pcsurvey},~\cite{yang2013rssi}. In contrast, the CSI matrix provides amplitude and phase information across multiple sub-carriers, is able to capture subtle environmental changes, making it a great choice for crowd counting \cite{liu2019wireless, wifivision, wang2024wall}.

Several CSI-based occupancy measurement systems \cite{zhao2019deepcount, jiang2023pa, guo2022csi, guo2025rssi, wang2025guiding} use support vector machines (SVM), long short-term memory (LSTM) networks, and convolutional neural networks (CNN) to estimate the number of people in a target area. However, these models are typically trained in isolation for each environment, which limits scalability in large deployments. A natural idea is to aggregate CSI data from many devices for centralized cloud training, but sharing Wi-Fi sensing data across environments and ownership domains introduces serious privacy and security risks \cite{tan2022commodity}. For example, attackers can generate adversarial CSI samples using jamming signals based on normal CSI data, causing incorrect or even malicious model outputs \cite{liu2023time}. Moreover, edge devices usually have limited computation due to low-cost and low-power constraints. Offloading raw CSI to the cloud for centralized processing is also problematic: the high dimensionality and sampling rate of CSI create a continuous data stream that can saturate communication bandwidth and interfere with basic WiFi services (e.g., Internet access). Therefore, an efficient distributed algorithm is needed to reduce communication overhead while preserving sensing accuracy.

Federated learning (FL) allows multiple edge devices or organizations to collaboratively train a global model without sharing raw data, coordinated by a central server \cite{yang2019federated}. However, deploying FL in distributed Wi-Fi sensing systems is challenged by statistical and model heterogeneity. CSI data from different environments are typically non-independent and identically distributed (non-IID), and edge devices have varying computational resources, making it difficult to learn a single global model \cite{pflsurvey, heterflsurvey}. For example, meeting rooms and offices exhibit different propagation and occupancy patterns, while hardware ranges from laptops with Intel 5300 NICs to resource-constrained ESP-32 modules, which call for different model architectures. Consequently, naive strategies such as simple model averaging are often insufficient to obtain an effective global model across such heterogeneous environments and devices.

To address these challenges, researchers have proposed various modifications to local training and global aggregation within the FedAvg framework \cite{fedavg}. For example, MOON \cite{moon} incorporated global model knowledge into local training to reduce the drift of local updates, guiding the update of the global model toward an unbiased trajectory. Another common strategy involved sharing prototypes (the mean of feature embeddings for each label) to regularize local training \cite{fedproto}. However, these existing methods are often limited by either focusing on addressing a single type of heterogeneity or achieving only marginal performance improvements, making them less effective for clients with diverse data and resource distributions, particularly in scenarios where some clients possess data from only a few classes \cite{li2022federated}.

This paper aims to address the challenges of statistical and model heterogeneity, including feature-skew, label-skew, and diverse model architectures, which commonly arise in distributed Wi-Fi people-counting applications. Specifically, we introduce a novel FL algorithm, FedAPA, which promotes collaboration across clients with similar data distributions by adaptive prototype aggregation (APA) module. The main contributions of this paper are five-fold:
\begin{itemize}
\item We present \textbf{FedAPA}, a FL algorithm for distributed Wi-Fi CSI crowd counting that handles both statistical heterogeneity (non-IID data) and model heterogeneity. FedAPA coordinates clients through class prototypes while keeping per-client personalization.

\item We design an APA module that weights and aggregates client prototypes by similarity. This enables knowledge sharing among similar clients and reduce communication overhead by exchanging compact prototypes instead of gradients or full models.

\item We introduce a hybrid local objective that aligns local and global knowledge by combining cross entropy with a prototype contrastive loss (PCL). A warm-up schedule gradually increases the PCL weight to avoid early instability and reduce drift between local and global representations.

\item We present a convergence analysis of the proposed algorithm and derive a nonconvex convergence rate, highlighting how prototype similarity-aware aggregation and warm-up strategies affect convergence.

\item We validate FedAPA on a real-world Wi-Fi CSI crowd-counting dataset across multiple architectures and non-IID splits. FedAPA consistently outperforms several FL-based Wi-Fi sensing baselines while lowering client-side communication overhead, showing practical gains for resource-constrained deployments.
\end{itemize}

\begin{table}[t]
\centering
\footnotesize
\setlength{\tabcolsep}{4pt}
\renewcommand{\arraystretch}{1.05}
\caption{Key notations used in this article.}
\begin{tabular}{@{}l >{\raggedright\arraybackslash}p{0.64\linewidth}@{}}
\toprule
\textbf{Symbol} & \textbf{Meaning} \\
\midrule
$N$ & Number of clients (Wi-Fi devices). \\
$\mathcal C,\ \mathcal C_i$ & Global label set; labels present at client $i$. \\
$\mathcal D_i,\ \mathcal D_i^{c}$ & Dataset at client $i$; class-$c$ subset at client $i$. \\
$w_i=(w_i^\theta,w_i^h)$ & Encoder and classifier parameters of client $i$. \\
$\mathbf r_i(\mathbf h)$ & Embedding of CSI input $\mathbf h$ at client $i$. \\
$\mathbf p_i^{c},\ \mathbf P_i$ & Class-$c$ prototype; prototype set of client $i$. \\
$\mathbf P$ & Stacked prototype sets of all clients. \\
$\mathbf q_i^{c},\ \mathbf Q_i$ & Personalized class-$c$ prototype; personalized set for client $i$. \\
$\tau$ & Temperature for aggregation and contrastive logits. \\
$\lambda$ & Weight of prototype-based losses. \\
$S,\ T,\ K{=}TS$ & Local SGD steps per round; rounds; total local steps. \\
$\eta$ & Learning rate for local updates. \\
$\mathcal L_{ce},\ \mathcal L_g,\ \mathcal L_c,\ \Phi$ & Cross-entropy loss; contrastive loss w.r.t.\ $\mathbf Q_i$; contrastive loss w.r.t.\ $\mathbf P$; combined prototype regularizer. \\
$L_{\max},\ \sigma^2,\ G$ & Smoothness bound after warm-up; gradient-variance bound; bound on $\mathbb E\|g_{i,t,s}\|^2$. \\
$G_{i,t}$ & Sum of gradient norms across $S$ local steps at round $t$ for client $i$. \\
$k_{\mathrm{glob}}$ & Prototype-movement constant in the convergence bounds. \\
$\Gamma(\tau)$ & Sensitivity term in prototype-refresh error, $\Gamma(\tau)=c_\Phi(\tau)(1+L_{\mathrm{agg}}(\tau))$. \\
$L_{\inf},\ \Delta_i$ & Lower bound on local objectives; initial gap for client $i$ after warm-up. \\
\bottomrule
\end{tabular}
\end{table}

The remainder of this paper is structured as follows: Section \uppercase\expandafter{\romannumeral2} reviews related works, providing the literature context for our proposed method. Section \uppercase\expandafter{\romannumeral3} introduces CSI background and outlines the problem statement. Section \uppercase\expandafter{\romannumeral4} presents our proposed federated learning algorithm. Section \uppercase\expandafter{\romannumeral5} provides a convergence analysis of our proposed approach. Section \uppercase\expandafter{\romannumeral6} describes the experimental setup and results. Section \uppercase\expandafter{\romannumeral7} and \uppercase\expandafter{\romannumeral8} conclude with a summary of findings and contributions.

\section{Related Works}

\subsection{Wi-Fi CSI-Based crowd Counting System}

Recent advancements in Wi-Fi CSI-based crowd counting systems~\cite{zou2017freecount, hou2022dasecount} have aimed to address the challenge of data distribution shifts when pretrained models are deployed in new environments. However, many models still require additional fine-tuning to achieve optimal performance in new environments. With the emergence of 6G technology, it is now possible to aggregate large amounts of data from diverse environments, enabling the training of a Wi-Fi sensing foundation model.

\subsection{Collaborative Wi-Fi-based Sensing System}
Federated learning (FL) trains Wi-Fi sensing models across edge devices without sharing raw CSI by using on-device updates and server-side aggregation. In Wi-Fi sensing, FL has been applied to handle non-IID data: CARING~\cite{caring} reweights client updates by local performance to improve cross-domain recognition, and WiFederated~\cite{wifederated} scales training across multiple locations. However, these methods often need extra training rounds or yield limited gains under strong non-IID conditions. To reduce labeling costs, Zhang et al.~\cite{zhang2022cross} propose a cross-domain FL framework that synthesizes wireless-like data from images and uses MMD-based domain adaptation, adding transfer and adaptation overhead. Liu et al.~\cite{liu2022vertical} introduce a vertical Federated Edge Learning (FEEL) framework that exchanges low-dimensional intermediate features for collaborative object and motion recognition while protecting raw data, but it mainly addresses feature-partitioned settings rather than label skew. In contrast, our approach targets label skew, feature skew, and resource limits jointly, and reduces communication by sharing and aggregating class prototypes instead of full models.

\subsection{Personalized FL}
To handle statistical heterogeneity in data and models, personalized FL (pFL) methods aim to learn client-specific models rather than a single global one. Instead of only optimizing a centralized model via distributed training, pFL explicitly tailors each client's model to its local data and, when needed, architecture. These methods are commonly grouped into three categories:
\begin{enumerate}
    \item \textbf{Global model with local fine-tuning}, as in WiFederated~\cite{wifederated}, which first trains a shared model and then adapts it on each client.
    \item \textbf{Personalized layers or representations}, such as FedRep~\cite{fedrep}, which shares a common encoder and trains local heads; FedBN~\cite{fedbn}, which keeps batch normalization layers local to mitigate feature shift; and FedProto~\cite{fedproto}, which exchanges class prototypes instead of model parameters to support heterogeneous architectures.
    \item \textbf{Personalized aggregation}, which builds local models from client-dependent combinations of global and local parameters, for example FedALA~\cite{fedala} with element-wise aggregation and FedAMP~\cite{fedamp} with similarity-weighted aggregation of client models.
\end{enumerate}
FedAPA belongs to class (2): it communicates and aggregates class prototypes through an APA module, so clients share compact class summaries while keeping their local models and statistics private.

\section{Preliminaries and Problem Statement}
In this section, we first briefly introduce Wi-Fi CSI, it is then followed by the formulation of a general prototype-based FL framework for Wi-Fi CSI-based crowd counting.

\subsection{Wi-Fi CSI}
Wi-Fi technologies utilize orthogonal frequency division multiplexing (OFDM), where data and reference symbols are transmitted via $S_{sub}$ orthogonal subcarriers. Wi-Fi CSI represents the estimated channel response between a Wi-Fi transmitter (Tx) and a receiver (Rx) for each subcarrier. This information is readily available at the receiver and can be extracted using CSI extraction middlewares such as Nexmon CSI extractor~\cite{gringoli2019free} Assuming $K_{pkt}$ data packets are transmitted by a Wi-Fi transmitter with $N_t$ antennas to a receiver with $N_r$ antennas, each Wi-Fi CSI sample is represented as a tensor with dimensions $N_t \times N_r \times K_{pkt} \times S_{sub}$. \begin{figure}[htbp]
        \centering
        \subfloat[Living room scenario]{\includegraphics[width=0.48\columnwidth]{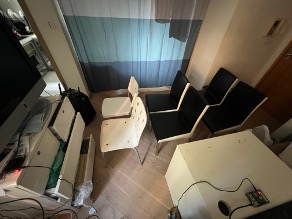}\label{fig:sub1}}\hfill
        \subfloat[Conference room scenario]{\includegraphics[width=0.48\columnwidth]{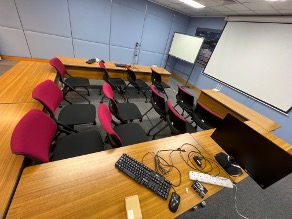}\label{fig:sub2}}
        \caption{Illustration of two different environment layout for Wi-Fi CSI-based crowd counting data collection.}
        \label{fig:pic1}
\end{figure}The channel coefficient between the $n_t$-th transmit antenna and $n_r$-th receive antenna for the $k$-th packet at subcarrier $s$ is expressed as:
\begin{equation}
\label{eqn_1}
{h}_{k,s}^{n_t,n_r} = |{h}_{k,s}^{n_t,n_r}|e^{j \cdot \theta_{k,s}^{n_t,n_r}},
\end{equation}
where $|{h}_{k,s}^{n_t,n_r}|$ and $\theta_{k,s}^{n_t,n_r}$ represent the amplitude and phase coefficients, respectively. The label of each Wi-Fi CSI sample, $y$, indicates the number of people present.

Consider two scenarios, each with a Wi-Fi transmitter-receiver pair, as shown in Fig. \ref{fig:pic1} (a) and (b). Each environment, such as a conference room versus a living room, has unique properties that cause variations in signal characteristics. The challenge lies in training models that generalize well across different environments without extensive data collection. A model trained in one environment may not perform well in another, necessitating the design of a collaborative training framework that shares knowledge across environments to improve generalization and reduce the need for data collection.

\begin{figure*}[!tb]
    \centering
    \includegraphics[width=0.7\textwidth]{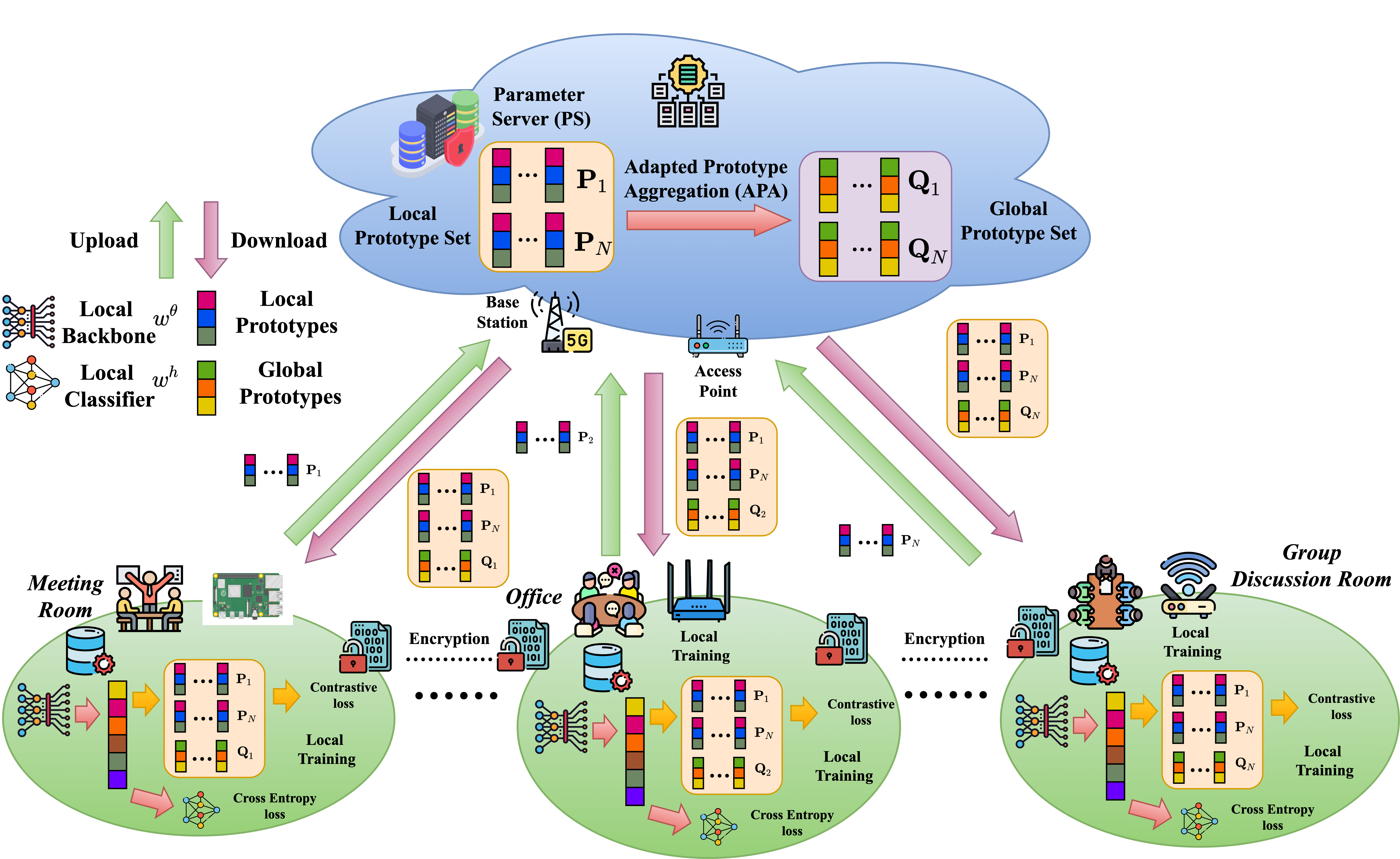}
    \caption{Overview of the proposed framework. \textbf{Bottom:} Clients with varying computational resources capture CSI data in distinct environments (e.g., Meeting Room, Office) and extract local prototypes from CSI data. \textbf{Top:} The Parameter Server aggregates uploads via Adapted Prototype Aggregation (APA) to form personalized sets $\mathbf{Q}$. \textbf{Training:} Clients optimize backbones $w^\theta$ and classifiers $w^h$ using the returned global $\mathbf{P}$ and personalized $\mathbf{Q}$ sets via a hybrid loss.}
    \label{fig:pic2}
\end{figure*}

\subsection{Prototype-based FL for Wi-Fi CSI-based crowd Counting}

FL leverages Wi-Fi CSI data from multiple environments to learn crowd counting models. Standard FL methods, such as FedAvg~\cite{fedavg}, train a single shared model across $N$ clients (Wi-Fi receivers in different environments). In practice, Wi-Fi CSI-based crowd counting is highly non-IID: people density and movement patterns vary across spaces (for example, a busy shopping mall on weekends versus an office after hours), and receivers also differ in computational resources. As a result, a single global model is often suboptimal, and personalized models tailored to each environment are needed.

We conceptualize a client model $w_i$ as the combination of a encoder $w^{\theta}_i$ and a classifier $w^h_i$. The embedding of the $l$-th Wi-Fi CSI matrix with label $y$ at client $i$, denoted $\mathbf{h}_{i}^{l,y}$, is computed as $\mathbf{r}(\mathbf{h}_{i}^{l,y}) \triangleq f_i\left(\mathbf{h}_{i}^{l,y}; w^{\theta}_i\right)$, and the prediction is given by $g_i\left(\mathbf{r}(\mathbf{h}_{i}^{l,y}); w^{h}_i\right)$. Each client will then generate its prototype for each class $c$ as $\mathbf{p}_{i}^{c} \triangleq \frac{1}{|\mathcal{D}_{i,c}|}\sum_{l=1, y \in c}^{|\mathcal{D}_{i,c}|} \mathbf{r}_{i}^{l, y}$ at the $t$-th communication round, where $\mathcal{D}_{i, c} = \left\{(\mathbf{h}, y) \in \mathcal{D}_i : y=c\right\}$, and $|\mathcal{D}_{i,c}|$ is the number of samples of class $c$ for client $i$. The client collaboration involves sharing their prototypes with a central server and generating global prototypes $\mathbb{P}$ for each client, which brings external information to other clients. The original prototype-based FL algorithm for Wi-Fi CSI-based crowd counting is formulated as:
\begin{equation}
\label{eqn_2}
\mathop{\min}_{w_1, w_2 \cdots, w_N} \frac{1}{N} \sum_{i=1}^{N} \mathbb{E}_{(\mathbf{h}, y) \in \mathcal D_i}\Bigl[\mathcal{L}\left(\mathbf{h}, y, \mathbb{P}; w_i\right)\Bigr],
\end{equation}
with the local loss function $\mathcal{L}$ defined as 
\begin{equation}
\label{eqn_3}
    \begin{aligned}
    \mathcal{L}\left(\mathbf{h}, y, \mathbb{P}; w_i\right) &= \mathcal{L}_{ce}\left(g_i\left(f_i\left(\mathbf{h}; w_i^{\theta}\right); w_i^h\right), y\right) \\
    &\quad + \lambda \, \mathcal{L}_{g}\left(f_i\left(\mathbf{h}; w_i^{\theta}\right), \mathbb{P}\right),
    \end{aligned}
\end{equation}
where $\mathcal{L}_{ce}(\cdot, \cdot)$ is the cross-entropy loss for classification, $\mathcal{L}_{g}(\cdot, \cdot)$ is a representation learning loss that aligns local embeddings with global prototypes, and $\lambda$ is a hyperparameter balancing the two loss components. Here,
$N$ is the total number of clients, $\mathcal{D}_{i}$ is the local dataset of client $i$, and $\mathbb{P}$ is the global prototypes that average clients' uploaded prototype set. In our proposed algorithm, we replaces $\mathbb P$ by personalized global prototypes $\mathbf Q_i$ built by similarity-aware aggregation, and adds an inter-client loss term that levergae the whole uploaded prototypes $\mathbf{P} = \left\{\mathbf{P}_i\right\}_{i=1}^{N}$. Details of $\mathbf{Q}_i$ generation and the local training procedure are described in Section~\ref{sec:fedapa}.

\section{FedAPA}
\label{sec:fedapa}
This section presents the FedAPA framework, illustrated in Fig.~\ref{fig:pic2}. We consider multiple crowd counting environments connected to a parameter server (PS). FedAPA introduces a prototype-based aggregation scheme inspired by prototype learning~\cite{prototypicalfs}, which fuses feature representations from heterogeneous data by aggregating class-wise embeddings. Unlike prior prototype-based FL methods~\cite{fedpcl, fedproc, fedproto} that form a single global prototype via uniform averaging, FedAPA constructs personalized global prototypes based on inter-client similarity. Each client receives its personalized prototypes and the full prototype set from the PS, and trains a local model with a hybrid loss that combines classification and prototype-wise contrastive learning. This aligns global and local representations and strengthens inter-client knowledge sharing in the latent space. The following subsections detail the three components of FedAPA: \textbf{Prototypes as compact information carriers}, \textbf{Server aggregation}, and \textbf{Local training}.

\subsection{Prototypes as compact information carriers}
\label{subsec:proto}
In Wi-Fi sensing task, raw CSI amplitude matrix is usually high-dimensional, making direct sharing across clients impractical due to communication constraints. To enhance client-specific knowledge sharing while reducing the communication overhead, we leverage a prototype-based information-sharing strategy. By Exchanging prototypes per client, the communication cost is far smaller than pushing full models or sharing raw CSI streams. Because prototypes are computed from the information of last round, they carry fresh information forward without exposing raw CSI. At $t$-th round, each client $i$ updates its local model $w_{i,t}$ on $\mathcal D_i$ and maps each CSI matrix $\mathbf{h}$ to an embedding $\mathbf{r}_i(\mathbf{h}_{i}^{l,y})$. For every class $c$, the client forms a prototype as its mean embedding: 
\begin{equation}
\label{eqn_4}
\mathbf p_{i,t}^c := \frac{1}{m_i^c}\sum_{\mathbf h\in\mathcal D_i^c} \mathbf r_{w_{i,t,S}^\theta}(\mathbf h), \qquad m_i^c := |\mathcal D_i^c| \ge 1.
\end{equation}
Once its local prototype set $\mathbf{P}_i$ is constructed, the client uploads it to the PS for aggregation, facilitating client-specific compact information sharing derived from local datasets. 

\subsection{Server aggregation}
\label{subsec:agg}
After receiving the prototype set $\mathbf{P}$, the PS first computes pairwise inter-client cosine similarity\cite{fedamp, nguyen2010cosine} using prototypes of each class as follows:
\begin{equation}
    \label{eqn_5}
    \displaystyle s_{ij}^c=\frac{(\mathbf p_i^c)^{\top}\mathbf p_j^c}{\|\mathbf p_i^c\|_2\|\mathbf p_j^c\|_2},\,c \in \left(\mathcal C_i \cap \mathcal C_j\right),
\end{equation}
where $\mathbf{p}_{i}^c$ and $\mathbf{p}_{j}^c$ are the prototypes of class $c$ of the $i$-th and $j$-th clients, respectively. Let $\mathcal J_c(i):=\{j:\,c \in \left(\mathcal C_i \cap \mathcal C_j\right)\}$ be the eligible client set for client $i$ with class $c$. It then generates an adaptive weight $\alpha_{ij}^c$ of each class with softmax normalization:
\begin{equation} 
    \label{eqn_6}
    \alpha_{ij}^c = \frac{\exp \left(s_{ij}^c / \tau\right)}{\sum_{j \in \mathcal J_c(i)} \exp \left(s_{ij}^c / \tau\right)},
\end{equation}
where $\tau$ is the temperature hyperparameter used to sharpen the distribution.
Instead of fixed or uniform weights, the adaptive weights $\alpha_{ij}^c$ allow each client aggregates class-wise prototypes mainly from similar peers. This adaptive weighting strategy reduces negative transfer from heterogeneous clients~\cite{fedamp, ye2023personalized} and produces client-specific prototype sets instead of a single global average. In our convergence analysis (Section~\ref{sec:convergence}), the effect of similarity-weighted aggregation is captured by a Lipschitz constant $\Gamma(\tau)$ that depends on the softmax temperature $\tau$, and enters the bounds through the cross-client coupling term $\lambda\,\Gamma(\tau)\,k_{\mathrm{glob}}\,G$. With the normalized weights, the server aggregates the prototypes of class $c$ for client $i$ as
\begin{equation}
    \label{eqn_7}
    \mathbf{q}_{i}^c = \sum_{j \in \mathcal J_c(i)} \alpha_{ij}^c \, \mathbf{p}_{j}^c.
\end{equation}
After aggregation for all clients, the server returns the global personalized prototypes $\mathbf{Q}_i:= \left\{\mathbf{q}_{i}^c\right\}_{c \in |\mathcal{C}_i|}$ to client $i$. To handle clients with missing classes and stabilize prototype-based collaboration under label-skew non-IID distributions, we adopt a prototype padding mechanism. For any class $\hat{c}$ absent at client $i$, we construct a pseudo-prototype for $\mathbf{Q}_i$ and $\mathbf{P}_i$ by sample-weighted averaging of the corresponding prototypes from other clients, i.e.,
\begin{equation}
\label{eq:padding}
\mathbf{p}_{i}^{\hat{c}} = \mathbf{q}_{i}^{\hat{c}} = \frac{1}{|\mathcal{J}_{\hat{c}}|} \sum_{j \in \mathcal{J}_{\hat{c}}} \mathbf{p}_{j}^{\hat{c}},
\end{equation} 
where $\mathcal J_{\hat{c}} := \{j:\, \hat{c} \in \mathcal{C}_j, \hat{c} \notin \mathcal{C}_i\}$, and include this padded prototype in our contrastive representation learning. This yields a complete, globally consistent prototype set for each client, while simultaneously providing informative negative anchors in the contrastive loss for classes that are not observed locally, thereby improving cross-client feature alignment without sharing raw data. This approach promotes stronger collaboration among clients with similar data distributions while reducing computational costs compared to methods like FedAMP~\cite{fedamp}, which rely on model parameters for collaboration. Unlike approaches such as \cite{fedproto, fedproc, fedpcl} that average uploaded prototypes directly, our method iteratively assigns weights to clients with similar prototypes during aggregation. This process creates a positive feedback loop that strengthens collaboration among similar clients. Furthermore, it dynamically forms cohorts of similar clients, enhancing the overall effectiveness of collaboration. Apart from the global personalized prototypes, each client will also receive the full uploaded prototype set $\mathbf{P}$.

\subsection{Local training}
\label{subsec:local}
Upon receiving aggregated prototype $\mathbf{Q}_i$ and client prototypes $\mathbf{P}_{\mathcal{J}_i}$ from the PS, the primary objective of local training is to effectively mitigate the divergence caused by local updates while extracting both class-relevant and inter-client representational knowledge. To achieve this, we propose a hybrid learning architecture that utilizes a warm-up coefficient to integrate classifier learning with representation contrastive learning. Figure \ref{fig:pic3} illustrates the local training process for each client. \begin{figure*}[!t]
    \centering
    \includegraphics[width=\textwidth]{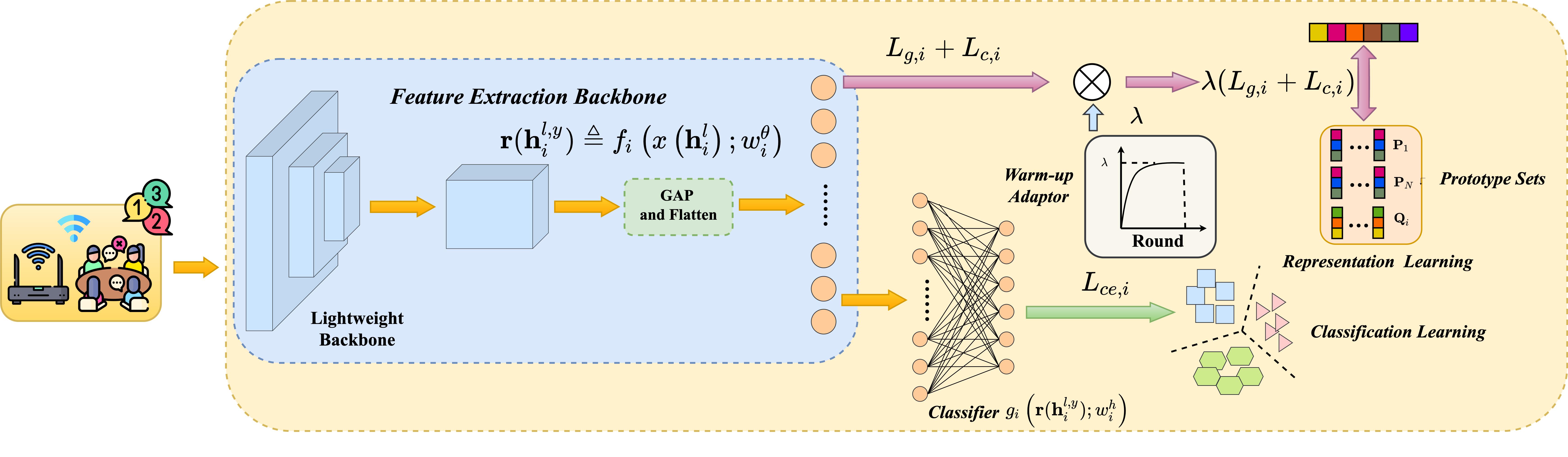}
    \caption{The encoder $w_i^{\theta}$ extracts features $\mathbf{r}_i$ from CSI input $\mathbf{h}$ for classification by $w_i^h$. The training objective integrates cross-entropy with contrastive alignment against prototype sets $\mathbf{Q}_i$ and $\mathbf{P}$, modulated by a warm-up coefficient $\lambda$ that progressively emphasizes representation learning after classification stabilizes.}
    \label{fig:pic3}
\end{figure*}To encourage $\mathbf{r}_i(\mathbf{h})$ extracted by local encoder to better align with its corresponding aggregated prototype so that it can learn more class-relevant and client-irrelevant knowledge, we introduce a global personalized prototype-based contrastive loss term, $\mathcal{L}_{g}$, as defined below: \begin{equation}
    \label{eqn_8}
    \mathcal{L}_{g} = \mathbb{E}_{(\mathbf{h}, y) \in \mathcal{D}_i} \Bigg[ -\log \frac{
        \exp \Big(\frac{\operatorname{cos} \left( \mathbf{r}_i(\mathbf{h}), \mathbf{q}_i^y \right)}{\tau}\Big)
    }{
        \sum_{\hat{y} \in \mathcal{C}} \exp \Big(\frac{\operatorname{cos} \left( \mathbf{r}_i(\mathbf{h}), \mathbf{q}_i^{\hat{y}} \right)}{\tau}\Big)
    }\Bigg].
\end{equation}
This contrastive loss function is designed to align the client's local representation to its corresponding global personalized prototype. This helps to reduce the discrepancy between local and global knowledge updates. Cosine similarity is used to compute the distance among global compact embedding and current local compact.

Apart from aligning global knowledge, to enable the local model to learn more inter-client knowledge, we also define a local prototype-based loss term, $\mathcal{L}_{c}$, as follows:
\begin{equation}
    \label{eqn_9}
    \mathcal{L}_{c} = \frac{1}{N} \sum_{j\in N} \mathbb{E}_{(\mathbf{h}, y) \in \mathcal{D}_i} \Bigg[-\log\frac{\exp\Big(\frac{\operatorname{cos} \left(\mathbf{r}_i(\mathbf{h}), \mathbf{p}_j^y\right)}{\tau}\Big)}{\sum_{\hat{y} \in \mathcal{C}} \exp \Big(\frac{\operatorname{cos} \left( \mathbf{r}_i(\mathbf{h}), \mathbf{p}_j^{\hat{y}} \right)}{\tau} \Big)}\Bigg].
\end{equation}
In addition to the representation learning terms, we also define a classification term, denoted as $\mathcal{L}_{ce}$, which employs a cross-entropy function for classifier learning.

To stabilize training and guide the model through a logical progression of learning stages, preventing it from being overwhelmed by complex objectives too early, we introduce a warm-up coefficient $\lambda$ inspired by curriculum learning~\cite{bengio2009curriculum}. It is defined as 
\begin{equation}
    \label{eqn_10}
    \begin{aligned}
    \lambda &= \lambda_{min} + \frac{(\lambda_{\max} - \lambda_{min})}{2} \left(1 - \cos\left(\pi \cdot u_t \right)\right), \\
    u_t &= \frac{\min\left(t, T_{warm}\right)}{T_{warm}},
    \end{aligned}
\end{equation}
where $\lambda_{min}$ and $\lambda_{\max}$ are the minimum and maximum values of $\lambda$, $t$ represents the $t$-th communication round, while the hyperparameter $T_{warm}$ controls the length of the representation learning warm-up phase. When $t\ge T_{\mathrm{warm}}$, $\lambda_t\equiv \lambda_{\max}$. The default values of $\lambda_{\min}$, $\lambda_{\max}$, and $T_{warm}$ are set to zero, one, and 50, respectively.

At the early stage, $\lambda$ is close to zero. The total loss is therefore dominated by the cross-entropy term $\mathcal{L}_{ce}$. Hence, the model's first and most important job is to learn the basics of the classification task. By focusing almost exclusively on cross-entropy, the model learns the coarse high-level decision boundaries. As training progresses, $\lambda$ smoothly increases from zero to one through cosine-based scheduler. The contrastive loss terms $\mathcal{L}_g$ and $\mathcal{L}_c$ become more and more influential. This curriculum introduces the harder task for the model learning: organizing the feature space. This feature organizing pressure from the contrastive loss forces the model to build a more robust, meaningful, and generalizable understanding of the feature from both local and global knowledge. \begin{algorithm}[t]
\caption{Server}
\label{alg:fedapa-server}
\begin{algorithmic}[1]
\Require Clients $i =\{1,\dots,N\}$, rounds $T$
\State Initialize prototype set $\mathbf{P}_{i, 0}$ and $\mathbf{Q}_{i, 0}$ for each client $i$
\For{$t=1$ to $T$}
  \For{each client $i$ in parallel}
    \State $\mathbf{P}_{i, t} \leftarrow$ ClientUpdate$(i,\,\mathbf{Q}_{i, t-1},\,\mathbf{P}_{t-1})$   
    \For{each $c\in\C_i$ with $c\in\C_j$}
    \State Compute cosine similarity $s_{ij}^{c}$ by Eq.~\eqref{eqn_5}
    \State Convert to weights $\alpha_{ij}^{c}$ by Eq.~\eqref{eqn_6}
    \State Form $\mathbf{q}_{i,t}^{c}$ by Eq.~\eqref{eqn_7}
    \EndFor
    \State Update $\mathbf{Q}_{i, t} \gets \{\mathbf{q}_{i,t}^{c}\}_{c\in\C_i}$
  \State Pad missing prototypes in $\mathbf{Q}_{i, t}$ and $\mathbf{P}_{i, t}$ by Eq.~\eqref{eq:padding}
  \EndFor
  \State \textbf{Return} to each client $i$: $\mathbf{Q}_{i, t}$ and the fully uploaded prototype sets $\mathbf{P}_{t}$
\EndFor
\end{algorithmic}
\end{algorithm}By the time $\lambda$ reaches its maximum value, the model is being trained to do both jobs simultaneously: classify correctly and maintain a well-structured internal representation. The total loss for client $i$ is computed by combining the weighted representation loss terms, $\lambda (\mathcal{L}_{g} + \mathcal{L}_{c})$ and $\mathcal{L}_{ce}$, through addition, as shown below: 
\begin{equation}
    \label{eqn_11}
    \begin{aligned}
    \mathcal{L}(\mathcal{D}_i, w_i)
    &= \mathcal{L}_{ce}\big(w_i\big) \\
    &\quad + \lambda \Big( 
        \mathcal{L}_{g}\big(w_i; \mathbf{Q}_i\big)
        + \mathcal{L}_{c}\big(w_i; \mathbf{P}\big)
    \Big).
    \end{aligned}
\end{equation}
Eq.~\eqref{eqn_11} modifies the original prototype-based FL loss Eq.~\eqref{eqn_3} by: (i) replace the general global prototype $\mathbb{P}$ with the personalized global prototype $\mathbf{Q}_i$; (ii) add additional inter-client prototype learning loss; (iii) using warm-up schedule for $\lambda$. After finishing local training of each communication round, all clients will upload their prototype sets to the PS for the next round.

\begin{algorithm}[t]
\caption{ClientUpdate$(i,\,\mathbf{Q}_{i, t - 1},\,\mathbf{P}_t - 1)$}
\label{alg:fedapa-client}
\begin{algorithmic}[1]
\Require Local dataset $\mathcal{D}_i$; model $w_i$; steps $S$;
\State Receive $\mathbf{Q}_{i, t - 1}$, and $\mathbf{P}_{t - 1}$
\State Update warm-up coefficient $\lambda_t$ by Eq.~\eqref{eqn_10}
\For{$s=1$ to $S$}
  \For{each batch $(\mathbf{h},y)$ in $\mathcal{D}_i$}
    \State Compute $\mathcal{L}_{g}$ by Eq.~\eqref{eqn_8} with personalzied global prototypes $\mathbf{Q}_{i, t - 1}$
    \State Compute $\mathcal{L}_c$ by Eq.~\eqref{eqn_9} using fully uploaded prototype sets $\mathbf{P}_{t - 1}$
  \State Update $w_{i, t}$ by Eq.~\eqref{eqn_11}
  \EndFor
\EndFor
\State Build local prototypes $\mathbf{p}_{i,t}^{c}$ for each $c\in\Ci$ by Eq.~\eqref{eqn_4} to form $\mathbf{P}_{i, t}$
\State \Return $\mathbf{P}_{i, t}$ to the server
\end{algorithmic}
\end{algorithm}

\section{Convergence Analysis}
\label{sec:convergence}
We summarize the main theoretical guarantees of FedAPA in this section, and refer to existing works~\cite{fedproto, ghadimi2013stochastic, lee2023implicit, newhouse2025softmax} for the following assumptions and results.

\subsection{Assumptions}
We assume full participation and let $L_{\inf}\in\mathbb R$ be a uniform lower bound for all local objectives.

\begin{assumption}[Smoothness and lower boundedness]
\label{ass:A1}
For each round $t$ and client $i$, the local objective $\mathcal L_{i,t}$ in
\eqref{eqn_11} has $L_t$–Lipschitz gradient:
\begin{equation}
\|\nabla\mathcal L_{i,t}(u) - \nabla\mathcal L_{i,t}(v)\|
\le L_t\|u-v\|,
\quad \forall u,v,
\end{equation}
with
\begin{equation}
L_t = L_{\mathrm{ce}} + \lambda_t L_\Phi,
\qquad
L_t \le L_{\max} := L_{\mathrm{ce}} + \lambda L_\Phi
\end{equation}
after warm-up. Moreover,
\begin{equation}
\mathcal L_{i,t}(w) \ge L_{\inf}
\quad \forall i,t,w.
\end{equation}
\end{assumption}

\begin{assumption}[Stochastic gradients]
\label{ass:A2}
For client $i$, round $t$, and local step $s$, the stochastic gradient $g_{i,t,s}$ satisfies
\begin{align}
\mathbb E[g_{i,t,s} \mid w_{i,t,s}]
&= \nabla \mathcal L_{i,t}(w_{i,t,s}), \\
\mathbb E\|g_{i,t,s} - \nabla \mathcal L_{i,t}(w_{i,t,s})\|^2
&\le \sigma^2,
\end{align}
for some $\sigma^2 \ge 0$, uniformly over all $i,t,s$. In addition, there exists $G>0$ such that
\begin{equation}
\mathbb E\bigl[\|g_{i,t,s}\|^2\bigr] \le G^2
\qquad \text{for all } i,t,s.
\end{equation}
\end{assumption}

\begin{assumption}[Bounded and Lipschitz representations]
\label{ass:A3}
Encoder outputs are uniformly bounded:
\begin{equation}
\|\mathbf r_{w^\theta}(\mathbf h)\| \le 1
\quad \text{for all } w^\theta,\mathbf h,
\end{equation}
so all prototypes (original and padded) satisfy $\|\mathbf p_{i,t}^c\|\le 1$.
The encoder is Lipschitz in its parameters: there exists $L_{w^\theta}>0$ such that
\[
\|\mathbf r_{w^\theta}(\mathbf h) - \mathbf r_{\hat w^\theta}(\mathbf h)\|
\le L_{w^\theta}\,\|w^\theta-\hat w^\theta\|,
\quad \forall w^\theta,\hat w^\theta,\mathbf h.
\]
\end{assumption}

\begin{assumption}[Lipschitz personalized aggregation]
\label{ass:A4}
Let $\mathbf P$ be the stacked local prototypes, and let
$\mathbf Q_\tau(\mathbf P)$ be the similarity-weighted aggregation in
Section~\ref{subsec:agg}. There exists $L_{\mathrm{agg}}(\tau)>0$ such that
\begin{equation}
\|\mathbf Q_\tau(\mathbf P) - \mathbf Q_\tau(\mathbf P')\|_F
\le L_{\mathrm{agg}}(\tau)\,\|\mathbf P - \mathbf P'\|_F
\quad \forall\,\mathbf P,\mathbf P'.
\end{equation}
\end{assumption}

\begin{assumption}[Prototype regularizer]
\label{ass:A5}
For each client $i$, the prototype-based regularizer
$\Phi_i(w;\mathbf Q,\mathbf P)$ is Lipschitz in $(\mathbf Q,\mathbf P)$ and
uniformly bounded: there exist $c_\Phi(\tau)>0$ and $B_\Phi>0$ such that
\begin{equation}
\begin{aligned}
|\Phi_i(w;\mathbf Q,\mathbf P)-\Phi_i(w;\mathbf Q',\mathbf P')|
&\le c_\Phi(\tau)\big(\|\mathbf Q-\mathbf Q'\|_F \\
&\quad 
+ \|\mathbf P-\mathbf P'\|_F\big),
\end{aligned}
\end{equation}
\begin{equation}
0\le \Phi_i(w;\mathbf Q,\mathbf P) \le B_\Phi,
\end{equation}
for all $w,\mathbf Q,\mathbf P,\mathbf Q',\mathbf P'$.
\end{assumption}
\begin{assumption}[Warm-up schedule]
\label{ass:A6}
The weights $\{\lambda_t\}_{t\ge 1}$ are nondecreasing and there exist
$T_{\mathrm{warm}}<\infty$ and $\lambda\ge 0$ such that
\begin{equation}
\lambda_t = \lambda
\quad \text{for all } t\ge T_{\mathrm{warm}}.
\end{equation}
\end{assumption}

For each client $i$ and round $t$, we define the within-round gradient norm
\begin{equation}
G_{i,t}^2 := \sum_{s=0}^{S-1}
\mathbb E\bigl[\|\nabla \mathcal L_{i,t}(w_{i,t,s})\|^2\bigr].
\end{equation}
We also define
\begin{equation}
\Gamma(\tau) := c_\Phi(\tau)\bigl(1 + L_{\mathrm{agg}}(\tau)\bigr),
\end{equation}
and denote by $k_{\mathrm{glob}}$ the prototype-movement constant from Lemma 2(b) in Appendix A.

\subsection{Convergence results}

Based on the above assumptions, we have the following convergence result for an arbitrary client in FedAPA.

\begin{theorem}[One-round deviation for an arbitrary client]
\label{thm:one-round}
Let Assumptions~\ref{ass:A1} to \ref{ass:A5} hold.
For any client $i$ and any $t$, with step size
$0<\eta\le 1/L_t$, we have
\begin{align}
\mathbb E\big[\mathcal L_{i,t+1}(w_{i,t+1,0})\big]
&\le
\mathbb E\big[\mathcal L_{i,t}(w_{i,t,0})\big]
- \tfrac{\eta}{2} G_{i,t}^2
\nonumber\\
&\quad
+ \tfrac{L_t\eta^2}{2}S\sigma^2
+ \lambda_{t+1}\,\Gamma(\tau)\,k_{\mathrm{glob}}\,\eta\,S\,G
\nonumber\\
&\quad
+ |\lambda_{t+1}-\lambda_t|\,B_\Phi.
\label{eq:one-round-dev}
\end{align}
\end{theorem}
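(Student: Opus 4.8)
The plan is to decompose the one-round gap into a within-round stochastic-descent piece, measured against the \emph{fixed} round-$t$ objective $\mathcal L_{i,t}$, and an objective-swap piece that accounts for the transition from $\mathcal L_{i,t}$ to $\mathcal L_{i,t+1}$ at the carried-over iterate $w_{i,t+1,0}=w_{i,t,S}$. First I would apply the standard descent lemma at each of the $S$ local steps. Using $L_t$-smoothness (Assumption~\ref{ass:A1}), the update $w_{i,t,s+1}=w_{i,t,s}-\eta g_{i,t,s}$, and the unbiasedness/variance bound (Assumption~\ref{ass:A2}) so that $\mathbb E[\|g_{i,t,s}\|^2\mid w_{i,t,s}]\le \|\nabla\mathcal L_{i,t}(w_{i,t,s})\|^2+\sigma^2$, one step gives $\mathbb E[\mathcal L_{i,t}(w_{i,t,s+1})]\le \mathbb E[\mathcal L_{i,t}(w_{i,t,s})]-\eta(1-\tfrac{L_t\eta}{2})\mathbb E\|\nabla\mathcal L_{i,t}(w_{i,t,s})\|^2+\tfrac{L_t\eta^2}{2}\sigma^2$. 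With $\eta\le 1/L_t$ the bracket is at least $1/2$, and telescoping over $s=0,\dots,S-1$ folds the gradient terms into $G_{i,t}^2$, producing $\mathbb E[\mathcal L_{i,t}(w_{i,t,S})]\le \mathbb E[\mathcal L_{i,t}(w_{i,t,0})]-\tfrac{\eta}{2}G_{i,t}^2+\tfrac{L_t\eta^2}{2}S\sigma^2$, i.e.\ the first three terms of \eqref{eq:one-round-dev}.

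Next I would handle the objective swap. Because the model is carried over while the prototypes are refreshed and $\lambda$ advances, at $w=w_{i,t,S}$ the cross-entropy part cancels and $\mathcal L_{i,t+1}(w)-\mathcal L_{i,t}(w)=\lambda_{t+1}\Phi_i(w;\mathbf Q_{i,t},\mathbf P_t)-\lambda_t\Phi_i(w;\mathbf Q_{i,t-1},\mathbf P_{t-1})$. Adding and subtracting $\lambda_{t+1}\Phi_i(w;\mathbf Q_{i,t-1},\mathbf P_{t-1})$ splits this into a prototype-refresh term $\lambda_{t+1}\bigl[\Phi_i(w;\mathbf Q_{i,t},\mathbf P_t)-\Phi_i(w;\mathbf Q_{i,t-1},\mathbf P_{t-1})\bigr]$ and a weight-change term $(\lambda_{t+1}-\lambda_t)\Phi_i(w;\mathbf Q_{i,t-1},\mathbf P_{t-1})$. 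The Lipschitz bound on $\Phi_i$ (Assumption~\ref{ass:A5}) controls the first by $c_\Phi(\tau)\bigl(\|\mathbf Q_{i,t}-\mathbf Q_{i,t-1}\|_F+\|\mathbf P_t-\mathbf P_{t-1}\|_F\bigr)$, and Lipschitz aggregation (Assumption~\ref{ass:A4}) bounds the $\mathbf Q$-gap by $L_{\mathrm{agg}}(\tau)\|\mathbf P_t-\mathbf P_{t-1}\|_F$, so both collapse into $\Gamma(\tau)\|\mathbf P_t-\mathbf P_{t-1}\|_F$ with $\Gamma(\tau)=c_\Phi(\tau)(1+L_{\mathrm{agg}}(\tau))$. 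The boundedness $0\le\Phi_i\le B_\Phi$ together with the monotone schedule (Assumption~\ref{ass:A6}) bounds the weight-change term by $|\lambda_{t+1}-\lambda_t|B_\Phi$.

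Finally I would invoke Lemma 2(b) for the prototype movement. The prototypes at rounds $t-1$ and $t$ are formed from the encoders $w_{i,t-1,S}^\theta=w_{i,t,0}^\theta$ and $w_{i,t,S}^\theta$, so their displacement equals the parameter drift over the $S$ local steps of round $t$; bounding each step by $\eta\|g_{i,t,s}\|$ and using $\mathbb E\|g_{i,t,s}\|\le(\mathbb E\|g_{i,t,s}\|^2)^{1/2}\le G$ (Assumption~\ref{ass:A2}) yields $\mathbb E\|w_{i,t,S}^\theta-w_{i,t,0}^\theta\|\le\eta S G$. Pushing this through the Lipschitz encoder (Assumption~\ref{ass:A3}) and the prototype-averaging map gives $\mathbb E\|\mathbf P_t-\mathbf P_{t-1}\|_F\le k_{\mathrm{glob}}\,\eta S G$, which is exactly Lemma 2(b). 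Substituting into the refresh term gives $\lambda_{t+1}\Gamma(\tau)k_{\mathrm{glob}}\eta S G$; taking total expectations and chaining with the descent bound assembles the four terms of \eqref{eq:one-round-dev}.

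I expect the main obstacle to be the bookkeeping of the swap under coupled randomness: the refreshed prototypes $\mathbf P_t$ depend on the same stochastic trajectory as the descent estimates, so expectations must be taken only after combining the two pieces, relying on the in-expectation movement bound rather than a pathwise identity. Getting the round indexing right—so that the factor $\lambda_{t+1}$ multiplies the round-$t$ drift while the absolute weight gap carries $B_\Phi$—is the delicate part; the remainder is routine smooth nonconvex SGD analysis.
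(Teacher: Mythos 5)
Your proposal is correct and follows essentially the same route as the paper's proof: the within-round descent lemma telescoped over $S$ steps, the add-and-subtract of $\lambda_{t+1}\Phi_i(\cdot;\mathbf Q_{t-1},\mathbf P_{t-1})$ to split the objective swap into a refresh term (chained through Assumptions~\ref{ass:A5} and \ref{ass:A4} into $\Gamma(\tau)\|\mathbf P_t-\mathbf P_{t-1}\|_F$) and a schedule term bounded by $|\lambda_{t+1}-\lambda_t|B_\Phi$, and the prototype-movement bound $\mathbb E\|\mathbf P_t-\mathbf P_{t-1}\|_F\le k_{\mathrm{glob}}\eta S G$. The only cosmetic deviations are that you bound the encoder drift via the triangle inequality and first moments where the paper uses Cauchy--Schwarz plus Jensen on second moments (both give $\eta S G$), and your appeal to Assumption~\ref{ass:A6} for the weight-change term is superfluous, since boundedness of $\Phi_i$ alone yields the $|\lambda_{t+1}-\lambda_t|B_\Phi$ bound.
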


Under Assumptions~\ref{ass:A1} to \ref{ass:A5}, Theorem~\ref{thm:one-round} bounds the one-round change of any client's objective as a descent term, proportional to the squared gradient norm, minus three error terms: a stochastic-variance term, a prototype-refresh term of order $\lambda_{t+1}\Gamma(\tau)k_{\mathrm{glob}}\eta S G$, and a schedule-change term of order $|\lambda_{t+1}-\lambda_t|B_\Phi$. When $\eta$, $\lambda_t$, and $\tau$ are chosen so that these errors are smaller than the descent term, the expected objective of each client is nonincreasing up to $O(\eta^2)$ and prototype-refresh fluctuations. Similarity-weighted aggregation and the warm-up schedule affect per-round descent only through $\Gamma(\tau)$, $k_{\mathrm{glob}}$, and $\lambda_t$, which is consistent with standard nonconvex SGD analyses and prototype-based FL results; intuitively, smoother or similarity-aware aggregation can reduce $\Gamma(\tau)$ and thus relax the per-round descent condition relative to a large fixed $\lambda$ with uniform weights.

Building on Theorem~\ref{thm:one-round}, we have the following overall convergence guarantee after the warm-up phase.

\begin{theorem}[$\varepsilon$-stationarity after warm-up]
\label{thm:epsilon-stationarity-correct}
Suppose Assumptions~\ref{ass:A1} to \ref{ass:A6} hold and that
$\lambda_t \equiv \lambda$ and $L_t \le L_{\max}$ for all $t \ge T_{\mathrm{warm}}$. We consider $T$ communication rounds after $T_{\mathrm{warm}}$; for notational convenience, we re-index these rounds as $t \in \{1,\dots,T\}$. Each round has $S$ local steps, so there are $K := T S$ total local updates in this phase. For any client $i$, the iterates of FedAPA satisfy
\begin{equation}
\begin{aligned}
\frac{1}{K}\sum_{t=1}^{T}\sum_{s=0}^{S-1}
\mathbb{E}\bigl[\|\nabla \mathcal L_{i,t}(w_{i,t,s})\|^2\bigr]
\;&\le\;
\frac{2\Delta_i}{K\eta}
+ L_{\max}\eta\sigma^2 \\
&\quad
+ 2\lambda\,\Gamma(\tau)\,k_{\mathrm{glob}}\,G,
\end{aligned}
\label{eq:avg-grad-bound-correct}
\end{equation}
for any stepsize $0<\eta\le 1/L_{\max}$, where
\begin{equation}
\Delta_i := \mathbb{E}\big[\mathcal L_{i,T_{\mathrm{warm}}}(w_{i,T_{\mathrm{warm}},0})\big]
- L_{\inf}.
\end{equation}

Moreover, for any target $\varepsilon>0$, if the parameters satisfy
\begin{align}
0 < \eta
&\le
\min\Big\{\frac{1}{L_{\max}},\; \frac{\varepsilon}{3L_{\max}\sigma^2}\Big\},
\label{eq:cond-eta-correct}\\[4pt]
\lambda
&\le
\frac{\varepsilon}{6\,\Gamma(\tau)\,k_{\mathrm{glob}}\,G},
\label{eq:cond-lambda-correct}\\[4pt]
T
&\ge
\frac{6\Delta_i}{\varepsilon\,\eta\,S},
\label{eq:cond-T-correct}
\end{align}
then FedAPA achieves $\varepsilon$-stationarity for client $i$:
\begin{equation}
\frac{1}{K}\sum_{t=1}^{T}\sum_{s=0}^{S-1}
\mathbb{E}\bigl[\|\nabla \mathcal L_{i,t}(w_{i,t,s})\|^2\bigr]
\;\le\; \varepsilon.
\end{equation}
\end{theorem}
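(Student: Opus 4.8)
The plan is to derive the average-gradient bound \eqref{eq:avg-grad-bound-correct} by telescoping the one-round descent inequality of Theorem~\ref{thm:one-round} across the post-warm-up rounds, and then to read off the parameter conditions by an equal error-budget split. First I would specialize Theorem~\ref{thm:one-round} to the regime $t \ge T_{\mathrm{warm}}$. Here Assumption~\ref{ass:A6} forces $\lambda_{t+1}=\lambda_t=\lambda$, so the schedule-change term $|\lambda_{t+1}-\lambda_t|B_\Phi$ vanishes, and $L_t \le L_{\max}$ controls the smoothness constant. With the re-indexing $t\in\{1,\dots,T\}$ adopted in the statement, the one-round inequality collapses to
\begin{equation}
\mathbb E\big[\mathcal L_{i,t+1}(w_{i,t+1,0})\big] \le \mathbb E\big[\mathcal L_{i,t}(w_{i,t,0})\big] - \tfrac{\eta}{2}G_{i,t}^2 + \tfrac{L_{\max}\eta^2}{2}S\sigma^2 + \lambda\,\Gamma(\tau)\,k_{\mathrm{glob}}\,\eta\,S\,G.
\end{equation}

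Next I would rearrange to isolate $\tfrac{\eta}{2}G_{i,t}^2$ on the left and sum over $t=1,\dots,T$. The objective differences telescope to $\mathbb E[\mathcal L_{i,1}(w_{i,1,0})] - \mathbb E[\mathcal L_{i,T+1}(w_{i,T+1,0})]$; bounding the terminal objective below by $L_{\inf}$ (Assumption~\ref{ass:A1}) and identifying the re-indexed first round with $T_{\mathrm{warm}}$ yields exactly $\Delta_i$. Substituting $G_{i,t}^2=\sum_{s=0}^{S-1}\mathbb E[\|\nabla\mathcal L_{i,t}(w_{i,t,s})\|^2]$, dividing through by $\tfrac{\eta}{2}K$ with $K=TS$, and simplifying the two per-round error terms via $T/K=1/S$ collapses the stochastic term to $L_{\max}\eta\sigma^2$ and the prototype-refresh term to $2\lambda\Gamma(\tau)k_{\mathrm{glob}}G$, giving precisely the three-term bound \eqref{eq:avg-grad-bound-correct}.

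Finally, to establish $\varepsilon$-stationarity I would allocate an equal budget $\varepsilon/3$ to each of the three terms. Requiring $\tfrac{2\Delta_i}{K\eta}\le\varepsilon/3$ and substituting $K=TS$ produces the round count \eqref{eq:cond-T-correct}; requiring $L_{\max}\eta\sigma^2\le\varepsilon/3$ produces the stepsize ceiling in \eqref{eq:cond-eta-correct}; and requiring $2\lambda\Gamma(\tau)k_{\mathrm{glob}}G\le\varepsilon/3$ produces the weight bound \eqref{eq:cond-lambda-correct}. Summing the three contributions recovers $\varepsilon$, which closes the argument.

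I expect the technical core to be entirely routine, since it mirrors the standard nonconvex SGD telescoping argument, and the only genuine care lies in the constant bookkeeping under $K=TS$. The one step that reflects the method's structure rather than generic SGD is the post-warm-up specialization: the floor $2\lambda\Gamma(\tau)k_{\mathrm{glob}}G$ is a non-vanishing bias induced by similarity-weighted prototype aggregation that does not shrink with $K$, so $\varepsilon$-stationarity is attainable only by driving $\lambda$ small via \eqref{eq:cond-lambda-correct}. The subtle point that makes the clean telescoping valid is verifying that the schedule-change term truly disappears under Assumption~\ref{ass:A6}, so that summing does not accumulate a residual $\sum_t|\lambda_{t+1}-\lambda_t|B_\Phi$ contribution.
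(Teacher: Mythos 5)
Your proposal is correct and follows essentially the same route as the paper's own proof: specializing Theorem~\ref{thm:one-round} to the post-warm-up regime where Assumption~\ref{ass:A6} kills the $|\lambda_{t+1}-\lambda_t|B_\Phi$ term, telescoping the resulting descent inequality over the $T$ re-indexed rounds, bounding the terminal objective by $L_{\inf}$ to obtain $\Delta_i$, normalizing by $K=TS$, and closing with the same equal $\varepsilon/3$ budget split that yields conditions \eqref{eq:cond-eta-correct}--\eqref{eq:cond-T-correct}. Your constant bookkeeping (the per-round terms collapsing to $L_{\max}\eta\sigma^2$ and $2\lambda\,\Gamma(\tau)\,k_{\mathrm{glob}}\,G$ after dividing by $\tfrac{\eta}{2}K$) matches the paper exactly.
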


Theorem~\ref{thm:epsilon-stationarity-correct} shows that, after warm-up, $\lambda_t\equiv\lambda$, the time-averaged gradient norm for any client is bounded by an initial gap term $O(1/(K\eta))$, a variance term $O(\eta)$, and a prototype-coupling term proportional to $\lambda\,\Gamma(\tau)\,k_{\mathrm{glob}}\,G$. For any target $\varepsilon>0$, the theorem gives explicit conditions on $\eta,\lambda,\tau$ and the number of rounds $T$ ensuring this bound is at most $\varepsilon$; the asymptotic error floor is of order $\lambda\,\Gamma(\tau)\,k_{\mathrm{glob}}\,G$, so larger $\Gamma(\tau)$ or $k_{\mathrm{glob}}$ require smaller $\lambda$. The influence of similarity-weighted prototypes and warm-up is fully captured by the constants $\Gamma(\tau)$, $k_{\mathrm{glob}}$, and $\lambda$; in practice, smaller $\tau$ tends to increase $\Gamma(\tau)$ and thus tighten the admissible range of $\lambda$, while moderate $\tau$ can mitigate this effect. With standard stepsize choices (e.g., $\eta$ on the order of $\varepsilon$ or $1/\sqrt{K}$), the bound recovers the usual $O(1/\sqrt{K})$ rate for nonconvex stochastic methods, up to constants from prototype coupling. We refer the reader to Appendix A for detailed proofs.

\begin{figure}[htbp]
    \centering
    \includegraphics[width=\columnwidth]{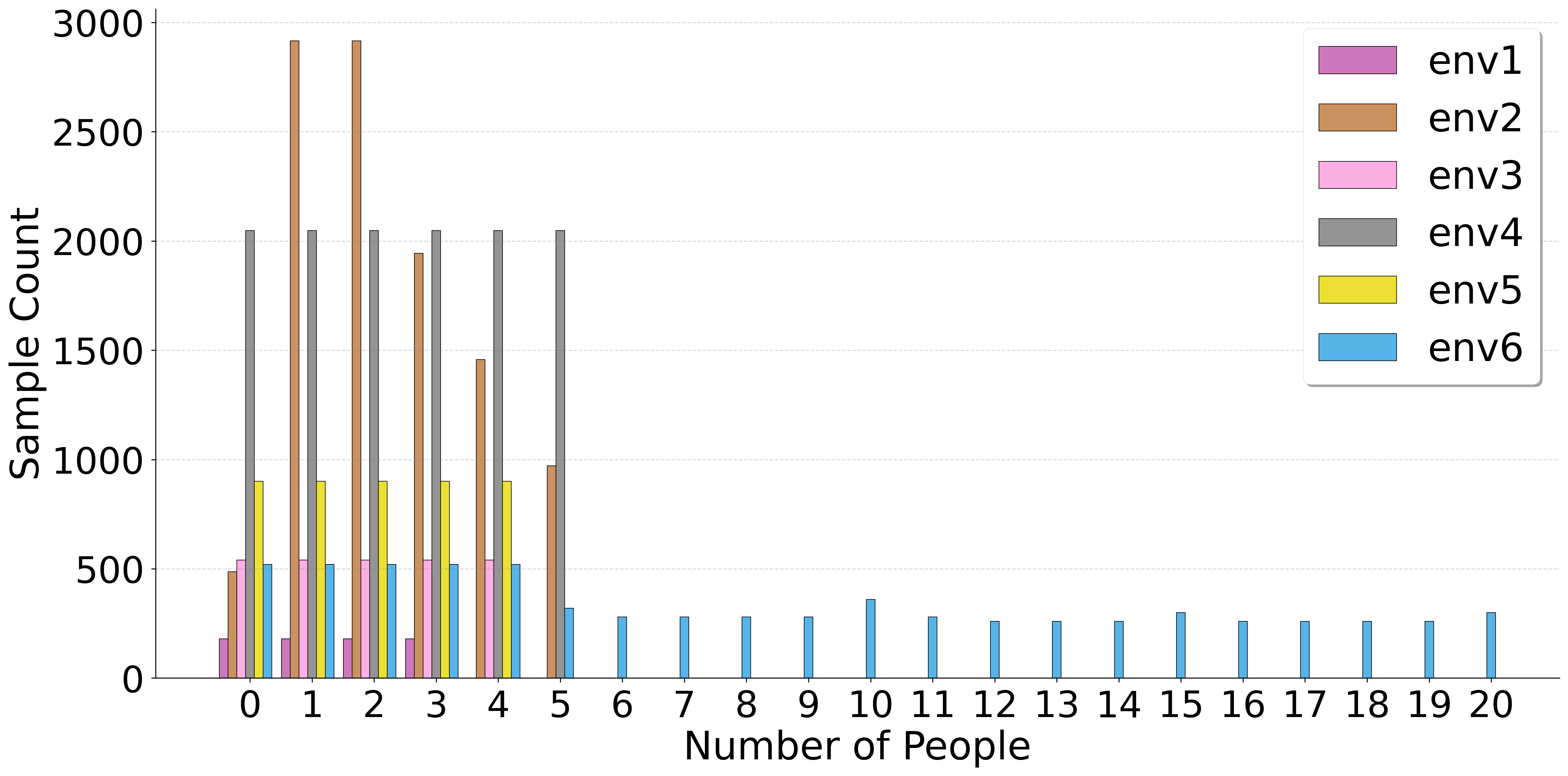}
    \caption{Data distribution of six distinct environments.}
    \label{fig:pic4}
\end{figure}
\begin{table}[htbp]
\centering
\caption{Architecture overview and layer configurations for TinyConvNet4, MiddleConvNet4, and LargeConvNet4 models. All convolutional layers use $3\times 3$ kernels with stride 2 unless otherwise specified. 'C' denotes the number of output channels.}
\label{tab:architecture_comparison}
\resizebox{\columnwidth}{!}{%
\begin{tabular}{c|c|ccc}
\toprule
\multicolumn{2}{c}{\textbf{Layer Information}} & \multicolumn{3}{c}{\textbf{Model Configurations}} \\
\cmidrule(lr){1-2} \cmidrule(lr){3-5}
\textbf{Layer Block} & \textbf{Layer Type} & \multicolumn{1}{c}{\textbf{TinyConvNet4}} & \multicolumn{1}{c}{\textbf{MiddleConvNet4}} & \multicolumn{1}{c}{\textbf{LargeConvNet4}} \\
& & \textbf{(C / Shape)} & \textbf{(C / Shape)} & \textbf{(C / Shape)} \\
\midrule
    \multirow{3}{*}{\textbf{Block 1}} & Conv2d & 256 / (500, 121) & 16 / (500, 121) & 16 / (500, 121) \\
    & BatchNorm2d & 256 & 16 & 16 \\
    & ReLU & - & - & - \\\midrule
    \multirow{3}{*}{\textbf{Block 2}} & Conv2d & - & 32 / (250, 61) & 32 / (250, 61) \\
    & BatchNorm2d & - & 32 & 32 \\
    & ReLU & - & - & - \\\midrule
    \multirow{3}{*}{\textbf{Block 3}} & Conv2d & - & - & 64 / (125, 31) \\
    & BatchNorm2d & - & - & 64 \\
    & ReLU & - & - & - \\\midrule
    \multirow{3}{*}{\textbf{Block 4}} & Conv2d & - & - & 128 / (63, 16) \\
    & BatchNorm2d & - & - & 128 \\
    & ReLU & - & - & - \\\midrule
    \multirow{3}{*}{\textbf{Block 5}} & Conv2d & - & - & 256 / (32, 8) \\
    & BatchNorm2d & - & - & 256 \\
    & ReLU & - & - & - \\\midrule
    \textbf{Pooling} & AdaptiveAvgPool2d & (256, 1, 1) & (32, 1, 1) & (256, 1, 1) \\\midrule
    	\textbf{Head} & Conv2d $(1\times 1, s=1)$ & - & 256 / (1, 1) & 256 / (1, 1) \\\midrule
    \textbf{Classifier} & Linear & 20 & 20 & 20 \\\midrule
    \textbf{Total Params} & - & \textbf{7.96 K} & \textbf{18.44 K} & \textbf{463.75 K} \\\midrule
    \textbf{FLOPs (MFLOPS)} & - & \textbf{340.75} & \textbf{162.85} & \textbf{606.35} \\
\bottomrule
\end{tabular}%
}
\begin{tablenotes}
\footnotesize
\item Note: '-' indicates the layer/block is not present in that model variant.
\end{tablenotes}
\end{table}

\section{Experimental Evaluation}
In this section, we evaluate our proposed FedAPA method using a real-world distributed Wi-Fi CSI people-counting dataset collected across six distinct scenarios, including living room, classroom, bus, conference room, and hotel room. We first introduce the dataset and model architectures used in our experiments. Then, we present the implementation details, including hyperparameter settings and evaluation metrics. Finally, we compare our method with several FL methods and conduct extensive ablation studies to validate the effectiveness of each component in our proposed framework.

\subsection{Datasets and Models}
Fig.~\ref{fig:pic4} shows the heterogeneous data distribution across six distinct environments, ranging between 0 and 20 people. \begin{figure}[htbp]
    \centering
    \includegraphics[width=\columnwidth]{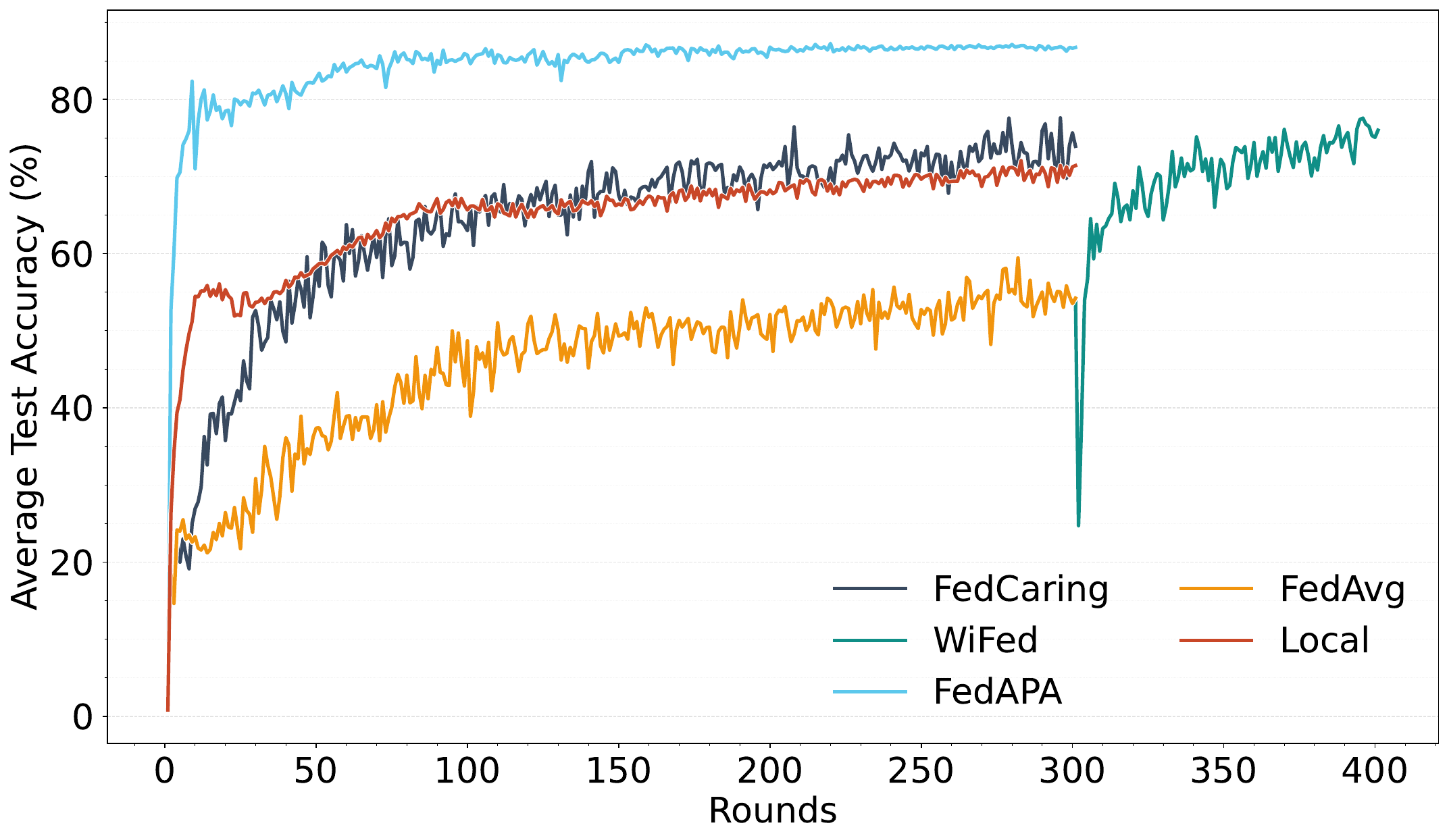}
    \caption{The convergence behavior of different methods in heterogeneous data.}
    \label{fig:pic6}
\end{figure}
\begin{figure}[htbp]
    \centering
    \includegraphics[width=\columnwidth]{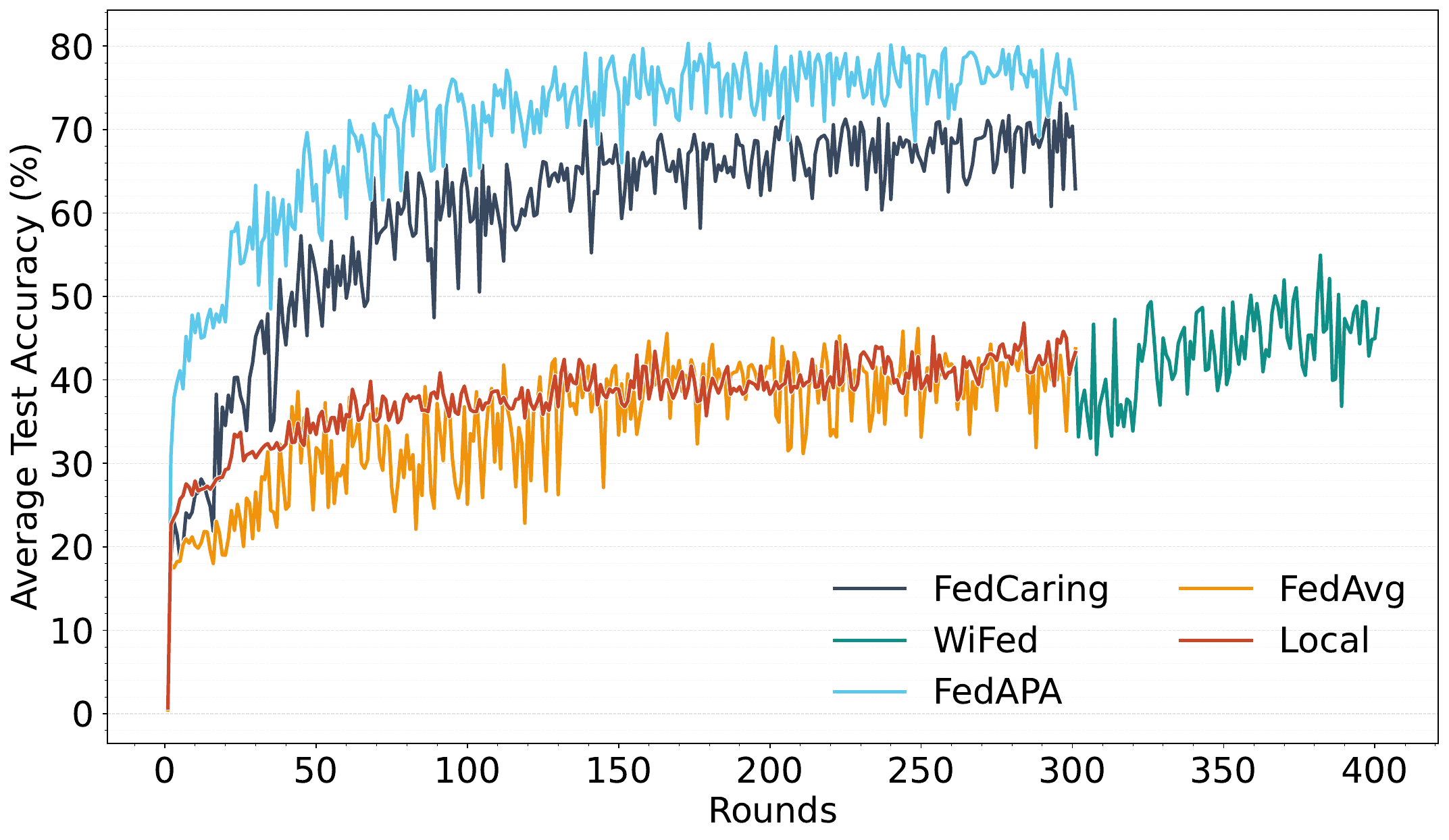}
    \caption{The convergence behavior of different methods in heterogeneous model.}
    \label{fig:pic7}
\end{figure}
Two of the six distinct environments for collecting CSI-based crowd counting data: a living room and a conference room, are depicted in Figs.~\ref{fig:pic1} (a) and (b). To simulate model heterogeneity in the Wi-Fi CSI-based crowd counting task where resource constraints exist, we design three CNN architectures and assign them based on the training data of each client. Table~\ref{tab:architecture_comparison} presents the number of layers, parameters, and computational costs for each model. By default, we use LargeConvNet4 in the subsequent experiments, except in the model heterogeneity setting experiments.

\subsection{Data Preprocessing}
We first filter out null subcarriers as they introduce outliers and degrade learning performance\cite{guo2025rssi}. Then a sliding window with a size of 1,000 without overlapping is applied to segment the CSI data along the time axis. Each segment is then transformed into a 1,000$\times$242 matrix, where 1,000 represents the number of CSI samples and 242 represents the number of subcarriers.

\subsection{Baselines}
We compare our method with local training and various FL methods that applied in Wi-Fi sensing tasks. Among them, WiFederated\cite{wifederated} fine-tuned the global model trained based on FedAvg\cite{fedavg} to build personalized models for each client. CARING\cite{caring} proposed a weight adaptation scheme that automatically assigns two different weights (i.e., 0.3 and 1) to each local model during model aggregation.
\begin{table}[htbp]
\centering
\caption{The overall performance comparison of different methods in both heterogeneous statistical data and model architecture settings.}\label{tab:tab1}
\resizebox{\columnwidth}{!}{%
\begin{tabular}{c|ccc|ccc}
\toprule
    \textbf{Heterogeneous Setting} & \multicolumn{3}{c|}{\textbf{Statistical Data}} & \multicolumn{3}{c}{\textbf{Model Architecture}} \\\midrule
    \textbf{Metric} & \textbf{Acc.} & \textbf{F1} & \textbf{MAE} & \textbf{Acc.} & \textbf{F1} & \textbf{MAE} \\\midrule
    Local & 72.06 & 71.32 & 0.69 & 46.81 & 39.45 & 1.26 \\\midrule
    FedAvg\cite{fedavg} & 59.46 & 38.26 & 2.01 & 46.16 & 37.51 & 2.03 \\\midrule
    WiFed\cite{wifederated} & 77.57 & \uline{76.91} & \uline{0.57} & 54.94 & 47.80 & 1.34 \\\midrule
    FedCaring\cite{caring} & \uline{77.60} & 69.18 & 0.62 & \uline{70.00} & \uline{68.43} & \uline{0.77} \\\midrule
    FedAPA & \textbf{87.25} & \textbf{85.91} & \textbf{0.23} & \textbf{80.31} & \textbf{78.94} & \textbf{0.48} \\
\bottomrule
\end{tabular}%
}
\end{table}

\subsection{Implementation Details}

The client participation ratio $\rho$ controls the fraction of clients sampled each round. When the number of clients is large, a smaller $\rho$ reduces bandwidth per round; in our setting with few clients, we use full participation and set $\rho=1.0$. The number of local epochs $E$ determines the interval between communication rounds; we set $E=1$ for all experiments.

For local training we use stochastic gradient descent (SGD) with batch size $16$, learning rate $10^{-2}$, momentum $0.5$, and weight decay $10^{-5}$. To reflect a personalized FL use case, evaluation is performed on each client's local test split. We report Accuracy, F1, and mean absolute error (MAE); F1 is included to handle class imbalance \cite{imbalancedata}. To reduce variance, we average metrics over the last five communication rounds in all tables. FedAPA results are highlighted in \textbf{bold}, and the best results among those baselines are \uline{underlined}. Each client uses a local split with $80\%$ of data for training and $20\%$ for testing. Experiments are run with PyTorch 2.7.1 on a server with two Intel Xeon Silver 4210R CPUs (20 cores), 128,GB RAM, and four NVIDIA RTX,3090 GPUs, running Ubuntu 24.04.

\subsection{Performance Evaluation and Analysis}
In this section, we compare FedAPA with multiple baselines, including traditional FL methods and FL methods that focus on statistical data. Hyperparameter settings of those baseline approaches are configured according to prior works\cite{wifederated,caring}. Extensive experiments were conducted to provide a comprehensive analysis of our methods.

\subsubsection{Practical heterogeneity setting}
The results of all algorithms in the practical heterogeneity setting are shown in Table \ref{tab:tab1}, and their convergence behavior is illustrated in Fig.~\ref{fig:pic6} and Fig.~\ref{fig:pic7}. Here we use data and model heterogeneity settings to have a comprehensive evaluation of our methods in heterogeneous Wi-Fi CSI-based crowd counting tasks. Compared with other baselines, FedAPA is the top-performing method in both model heterogeneity and data heterogeneity setting. For the data heterogeneity setting, FedAPA increases the local accuracy, F1, and MAE by 15.19\%, 14.59\%, and 0.46 respectively. In the model heterogeneity setting, FedAPA increases the local accuracy, F1, and MAE by 33.5\%, 39.49\%, and 0.78 respectively. It is worth noting that the FedAPA method has worse sensing performance in the model heterogeneity setting. Those methods that include simple model parameter averaging also encounter performance degradation in the model heterogeneity scenario, which suggests a more sophisticated model aggregation mechanism during the global model generation. In this practical heterogeneity setting, the local data distributions and model architectures across all clients become imbalanced. Consequently, the F1 metrics of all algorithms are inferior to their accuracies. \begin{figure}[htbp]
    \centering
    \subfloat[Impact of feature dimensions]{\includegraphics[width=0.47\columnwidth]{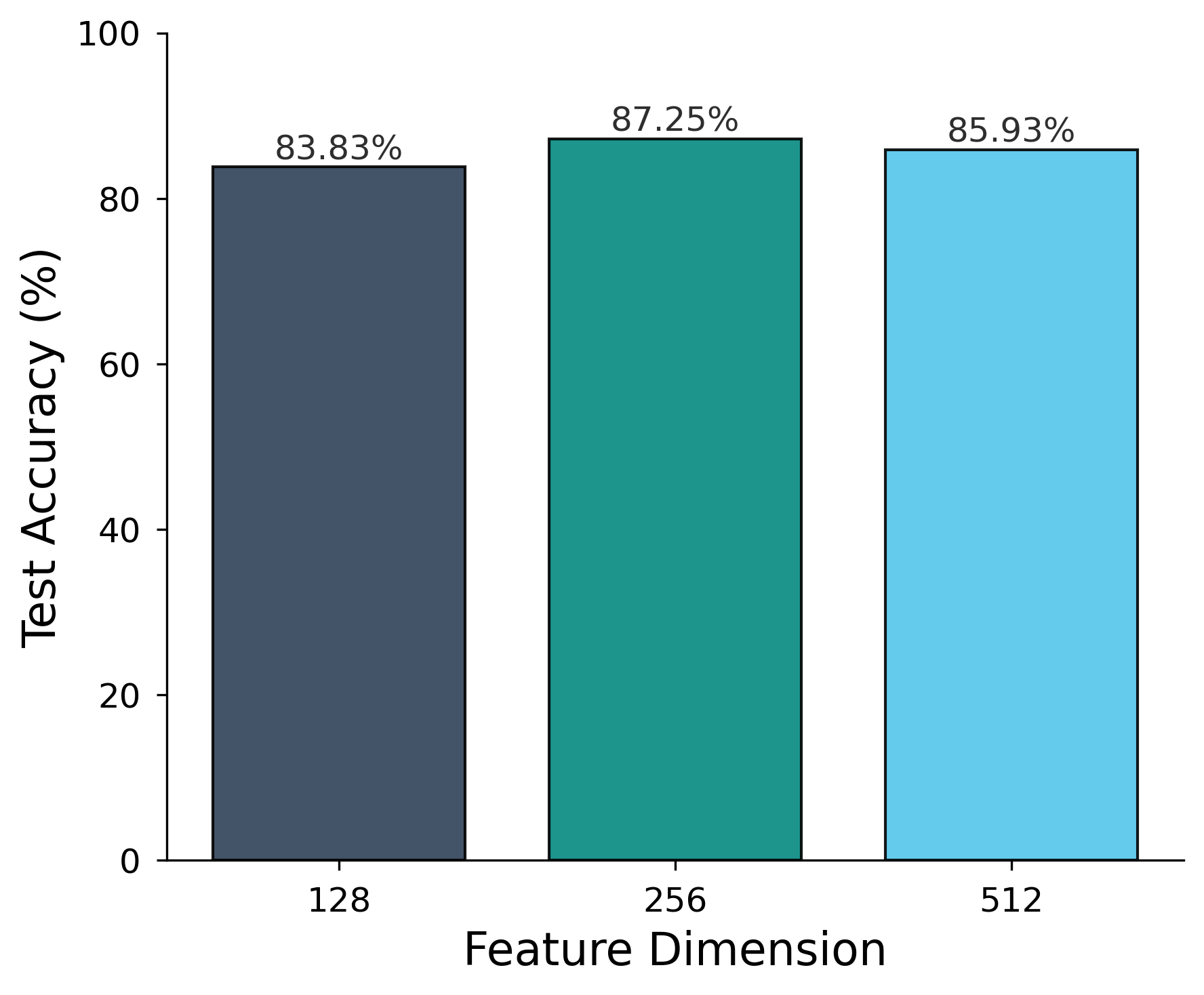}\label{fig:pic8a}}\hfil
    \subfloat[Impact of warm-up rounds]{\includegraphics[width=0.47\columnwidth]{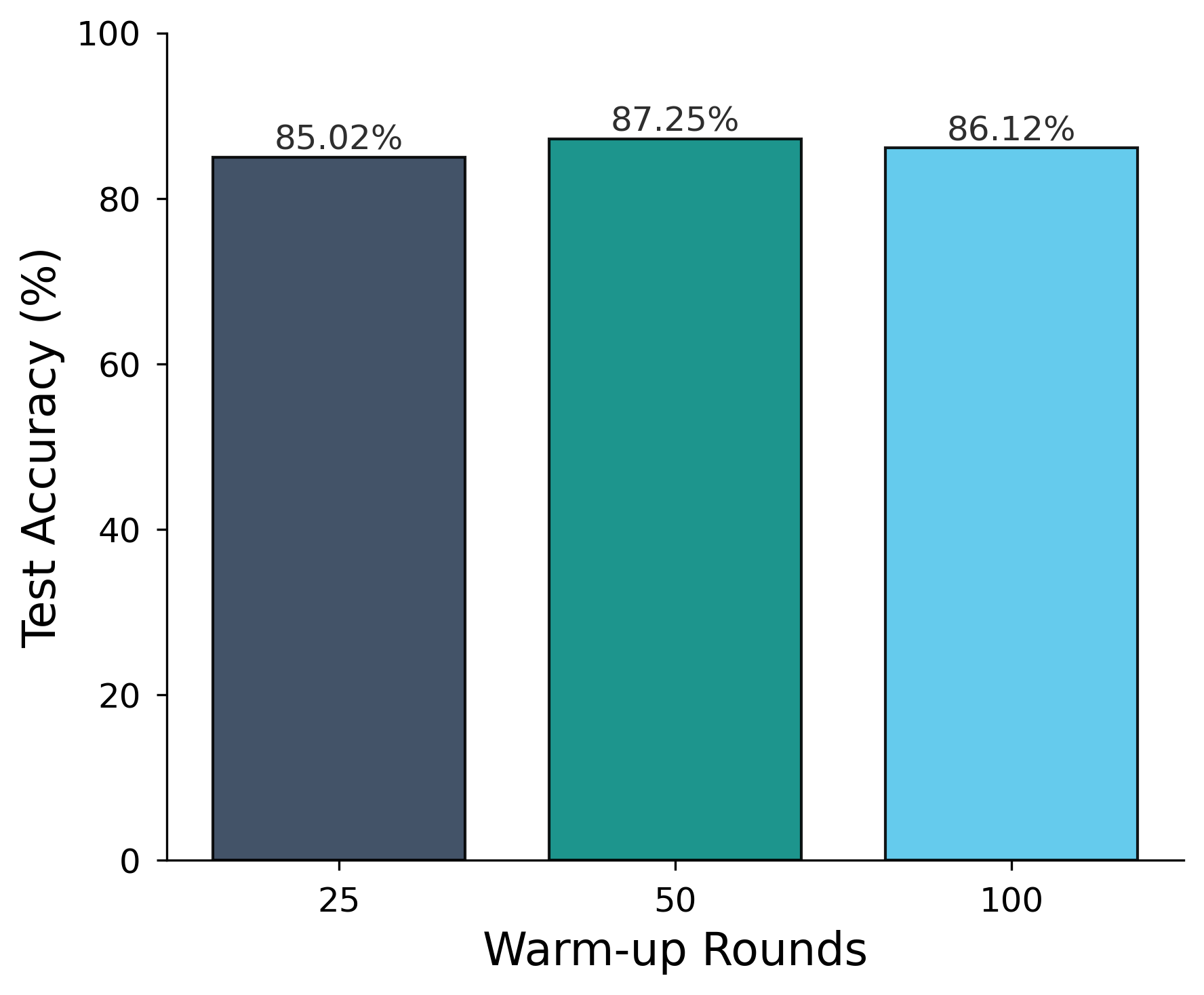}\label{fig:pic8b}}
    \caption{Impact of \textbf{(a)} feature dimensions and \textbf{(b)} warm-up rounds on FedAPA performance.}
    \label{fig:pic8}
\end{figure}
Moreover, simplely averaging the model parameters may introduce significant local drift updates in heterogeneous model scenarios. FedAvg performs worse than the majority of other baselines. After fine-tuning over several epochs using either original or re-balanced local data, the performance is remarkably improved, as demonstrated by the WiFed. This suggests that fine-tuning from a pre-trained representation can help improve the Wi-Fi sensing performance under heterogeneous settings. FedAPA further outperforms WiFed in either data or model heterogeneity settings, even without fine-tuning from local data, which demonstrates its superior capability in tackling heterogeneous tasks. The performance of FedCaring also suggests that by carefully assigning weights to local models during aggregation, the negative impact of data and model heterogeneity can be alleviated. However, FedCaring still falls short of FedAPA, which indicates that simply assigning two different weights to local models is insufficient to capture the complex relationships among clients.

\subsubsection{Impact of different feature dimensions for embeddings} 
As conise score is used to measure the similarity between embeddings and prototypes, the feature dimension of embeddings may affect the performance of FedAPA. We conduct experiments with different feature dimensions, including 128, 256, and 512. The results are shown in Fig.~\ref{fig:pic8a}. It can be observed that the performance of FedAPA improves as the feature dimension increases from 128 to 256. However, when the feature dimension is further increased to 512, the performance slightly decreases. This suggests that a moderate feature dimension is beneficial for capturing the essential characteristics of the data while avoiding overfitting or redundancy in the embeddings. Therefore, we choose 256 as the default feature dimension for our experiments.

\subsubsection{Impact of different warm-up rounds}
Another important hyperparameter is the number of warm-up rounds $T_{warm}$, which controls the length of the representation learning warm-up phase. \begin{figure}[htbp]
    \centering
    \subfloat[Impact of different temperatures.]{\includegraphics[width=0.47\columnwidth]{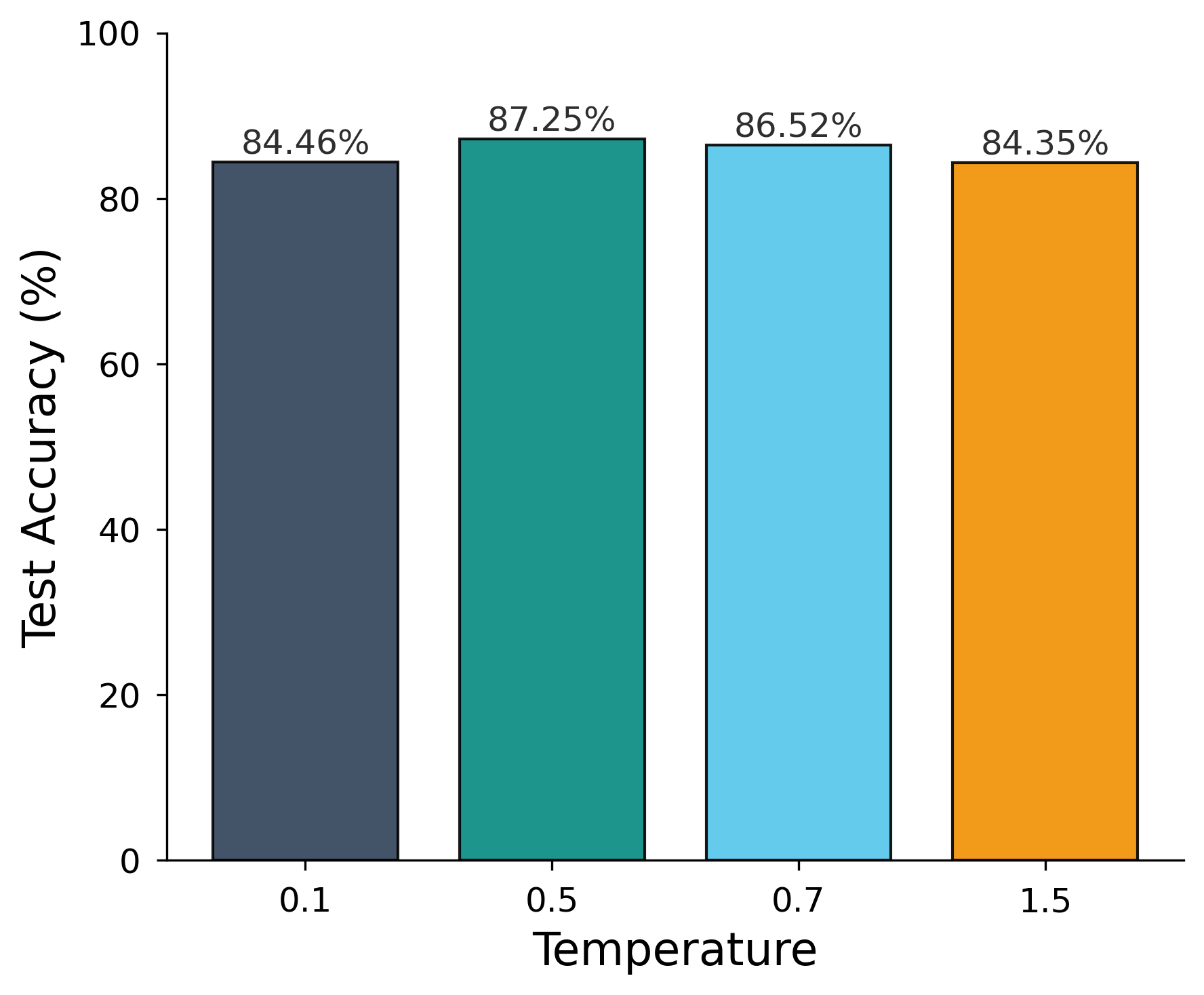}\label{fig:pic9a}}\hfil
    \subfloat[Impact of warm-up based $\lambda$ coefficient.]{\includegraphics[width=0.47\columnwidth]{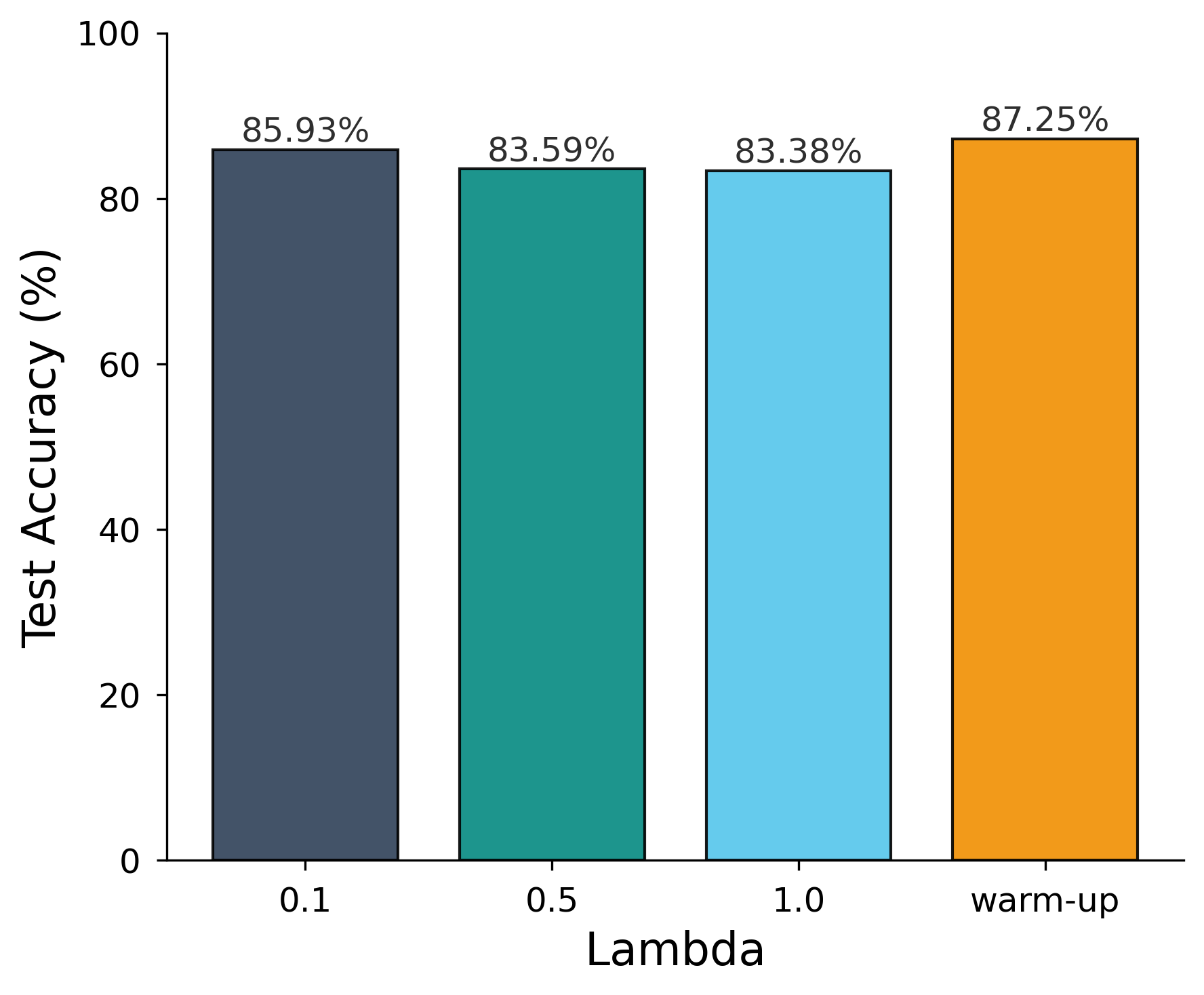}\label{fig:pic9b}}
    \caption{Impact of \textbf{(a)} different temperatures and \textbf{(b)} warm-up based $\lambda$ coefficient on FedAPA performance.}
    \label{fig:pic9}
\end{figure}We conduct experiments with different warm-up rounds, including 25, 50, and 100. The results are shown in Fig.~\ref{fig:pic8b}. A similar trend can be observed that the performance of FedAPA improves as the number of warm-up rounds increases from 25 to 50. However, when the number of warm-up rounds is further increased to 100, the performance slightly decreases. This suggests that a moderate number of warm-up rounds is beneficial for stabilizing training and guiding the model through a logical progression of learning stages. Too short warm-up rounds may cause the contrastive loss to dominate the training process early, leading to suboptimal performance. Conversely, too long warm-up rounds may delay the influence of contrastive loss, hindering the model's ability to learn robust representations. Therefore, we choose 50 as the default number of warm-up rounds for our experiments.

\subsubsection{Impact of different temperatures for contrastive loss terms and prototype aggregation} 
Temperature $\tau$ is a crucial hyperparameter in cosine similarity to adjust the tolerance for feature distribution~\cite{simclr}. We conduct experiments with different temperatures, including 0.1, 0.5, 0.7, and 1.5. The results are shown in Fig.~\ref{fig:pic9a}. It can be observed that the performance of FedAPA improves as the temperature increases from 0.1 to 0.5. However, when the temperature is further increased from 0.5 to 1.5, the performance slightly decreases. This suggests that a moderate temperature is beneficial for balancing the influence of negatives and preventing over-smoothing or over-sharpening of the similarity scores. Therefore, we choose 0.5 as the default temperature for our experiments.

\subsubsection{The impact of the coefficient ($\lambda$) on representation learning}
$\lambda$ is another importance hyperparameter that balances the contribution of contrastive representation learning and classification learning. We compare our warm-up based coefficient with three static coefficients, including 0.1, 0.5, and 1.0. The results are shown in Fig.~\ref{fig:pic9b}. As indicated by results, configurations with a static coefficient perform worse when compared to our warm-up based coefficient setting. The use of a static coefficient leads to the model encoder excessively focusing on either contrastive representation learning early (thereby compromising classification performance) or classification learning (thus hindering the development of globally informative representations). This suggests that by warming up the coefficient during a certain number of rounds, the model can gradually increase the emphasis on contrastive representation learning, leading to better overall performance. This adaptive approach results in enhanced performance of the model encoder and greater robustness. 

\subsubsection{Impact of similarity-aware prototype aggregation and other clients' prototypes involvement}
To validate the effectiveness of our proposed similarity-aware prototype aggregation method, we conduct experiments comparing it with a simple global average prototype aggregation method. \begin{table}[htbp]
\centering
\caption{The effect of similarity-aware prototype aggregation and other clients' prototypes involvement.}\label{tab:tab2}
\resizebox{\columnwidth}{!}{%
\begin{tabular}{c|ccc|ccc}
\toprule
    \textbf{Heterogeneous Setting} & \multicolumn{3}{c|}{\textbf{Statistical Data}} & \multicolumn{3}{c}{\textbf{Model Architecture}} \\\midrule
    \textbf{Metric} & \textbf{Acc.} & \textbf{F1} & \textbf{MAE} & \textbf{Acc.} & \textbf{F1} & \textbf{MAE} \\\midrule
    Global Avg. Proto & 74.02 & 73.59 & 0.71 & 54.75 & 47.89 & 1.62 \\\midrule
    Global Avg. Proto with Client Protos & \uline{82.03} & \uline{81.47} & \uline{0.38} & \uline{69.41} & \uline{67.20} & \uline{0.77} \\\midrule
    FedAPA & \textbf{87.25} & \textbf{85.91} & \textbf{0.23} & \textbf{80.31} & \textbf{78.94} & \textbf{0.48} \\
\bottomrule
\end{tabular}%
}
\end{table}
\begin{table}[htbp]
\centering
\caption{The theoretical and practical communication cost per communication round for each client using LargeConvNet4.}\label{tab:tab3}
\begin{tabular}{c|c|c}
\toprule 
  Methods & Theoretical & Practical \\\midrule 
  FedAvg\cite{fedavg} & $2 * \sum_w $ & \uline{3,710 KB} \\
  FedCaring\cite{caring} & $2 * \sum_w $ & \uline{3,710 KB} \\
  WiFederated\cite{wifederated} & $2 * \sum_w $ & \uline{3,710 KB} \\\midrule
    FedAPA & $ \sum_p * \Big(\sum_{N + 1} |\mathcal{C}|\Big)$ & \textbf{150.53 KB} \\
\bottomrule
\end{tabular}
\end{table}In the global average prototype aggregation method, the server computes the global prototype for each class by averaging the prototypes from all clients without considering their similarities. The results are shown in Table~\ref{tab:tab2}. It can be observed that our proposed similarity-aware prototype aggregation method significantly outperforms the simple global average prototype aggregation method in both data and model heterogeneity settings. This suggests that by considering the similarities between clients' prototypes, our method can effectively capture the relationships among clients and generate more informative global prototypes for each client. Additionally, we also evaluate the impact of involving other clients' prototypes in the local contrastive loss. The results indicate that incorporating other clients' prototypes further enhances the performance of our method, demonstrating the benefits of leveraging information from multiple clients to improve local representation learning.

\subsubsection{Communication overhead}
Apart from analyzing sensing performance, we also evaluate the communication cost per round for each client, which provides a comprehensive performance evaluation of our proposed framework. The communication overhead for each algorithm is assessed based on the calflops\cite{calflop} library. Table~\ref{tab:tab3} presents the detailed overhead results, where $\sum_w$ denotes the total number of parameters in a model, and $\sum_p$ indicates the total number of parameters in a prototype. It shows that our FedAPA approach has the least communication overhead when compared with those model-sharing based approaches. Fine-tuning approaches based on FedAvg, which involve uploading and downloading full models to and from the PS, have similar (and much larger) overhead. In contrast, FedAPA only requires the exchange of global and local prototypes, resulting in significantly lower communication costs. This makes FedAPA more suitable for deployment in resource-constrained environments where communication bandwidth is limited.

\section{Discussions}
Applying FL to cross-domain Wi-Fi sensing involves trade-offs among label availability, communication, computation, and scalability; no single design can optimize all dimensions at once. We next discuss key limitations of our approach.

\subsection{Scalability and Client Selection Strategy}
Partial client participation better reflects real deployments, where many devices cannot connect to the PS in every round \cite{clientselect}. Selecting only a subset of clients reduces uplink traffic but can degrade performance if the policy ignores representation diversity and system heterogeneity. While we do not evaluate FedAPA under partial participation, we suggest that a client selection strategy considering prototype diversity and device capability could enhance efficiency without compromising accuracy. In short, partial participation saves bandwidth, but selection must preserve informative clients while limiting overhead.

\subsection{Label Scarcity}
Our methods assume supervised learning. In practice, labeled CSI is scarce: ordinary users rarely annotate data and labels are expensive to curate in cross-domain Wi-Fi scenarios. This gap suggests moving from purely supervised FL toward semi-supervised or unsupervised formulations that exploit large volumes of unlabeled, crowdsourced CSI while preserving privacy.

\subsection{Computation Efficiency and Communication Cost}
Computation and communication jointly determine the practicality of FL systems. FedAPA communicates only global and local prototypes, which lowers communication overhead compared with approaches that exchange full model parameters. However, the additional prototype-based loss in the local training loop increases on-device computation, so there is a trade-off between reduced communication overhead and extra local compute.

\section{Conclusion}
This paper addresses statistical and model heterogeneity in distributed Wi-Fi CSI-based crowd counting with FedAPA, a federated learning method built around an adaptative prototype aggregation module. APA performs similarity-weighted aggregation of class prototypes, so clients with related distributions exchange compact class-level representations instead of full models, improving collaboration while preserving personalization and reducing communication overhead. Each local model is decomposed into an encoder and a classifier to enable shared representation learning with personalized decision layers. To align local and global knowledge, we adopt a hybrid local objective that couples classification loss with a prototype-contrastive loss under a warm-up schedule. We further provide a non-convex convergence analysis under standard smoothness and bounded-variance conditions to provide theoretical insight. Experiments on a Wi-Fi CSI crowd-counting dataset with six environments show consistent gains over FL-based Wi-Fi sensing baselines and substantially lower client-side bandwidth. This study focuses on a single sensing task; future work will extend FedAPA to multi-task settings for cross-task collaboration, move toward a Wi-Fi sensing foundation model with low communication cost, and investigate test-time environmental dynamics with unsupervised domain adaptation in the FL pipeline.
\normalem
\bibliographystyle{IEEEtran}

\appendices
\setcounter{assumption}{0}
\setcounter{theorem}{0}
\setcounter{lemma}{0}
\setcounter{corollary}{0}
\section{Convergence Analysis}
\label{app:convergence}

This appendix contains the full convergence analysis for FedAPA. It includes all notations, detailed assumptions, and complete proofs of lemmas and theorems in the main article.

\subsection{Notations}
Communication rounds are $t\in\{1,\dots,T\}$, local steps are $s\in\{0,\dots,S-1\}$, and $K:=TS$. Clients are $i\in\{1,\dots,N\}$. Client $i$ has an encoder $f_i(\cdot;w_{i,t}^\theta)$ and a classifier $w_{i,t}^h$ at round $t$; write $w_{i,t}=(w_{i,t}^\theta,w_{i,t}^h)$. The stochastic gradient at local step $(t,s)$ is $g_{i,t,s}$. Write $\mathbf Q_{i,t} := \{\mathbf q_{i,t}^c\}_{c\in\mathcal C}$ and $\mathbf Q_t := \mathbf Q_\tau(\mathbf P_t) := \{\mathbf Q_{i,t}\}_{i=1}^N$. Within round $t$, with $\mathbf P_t$ and $\mathbf Q_t$ fixed, client $i$ optimizes the local objective 
\begin{equation}
\label{eq:local-loss-causal}
\mathcal L_{i,t}(w)
:=
\mathcal L_{\mathrm{ce},i}(w)
+ \lambda_t\,\Phi_i(w; \mathbf Q_{t-1}, \mathbf P_{t-1}),
\qquad t \ge 1,
\end{equation}
where $\mathcal L_{\mathrm{ce}}$ is the cross-entropy loss and $\Phi_i$ collects the prototype-based contrastive terms $\mathcal L_{g} + \mathcal L_{c}$ defined in Section IV-C. For $t=1$, we assume an randomly initialized prototype set $(\mathbf P_0,\mathbf Q_0)$. At $t$-th round, clients update prototypes to $\mathbf P_t$ and the server constructs $\mathbf Q_t = \mathbf Q_\tau(\mathbf P_t)$, which are then used in the next round via \eqref{eq:local-loss-causal}. The schedule $\{\lambda_t\}$ follows the cosine warm-up in Section IV-C, and becomes constant after $T_{\mathrm{warm}}$ rounds. For convenience define the per-round stationarity metric 
\begin{equation} 
\label{eq:Git-def} G_{i,t}^2 := \sum_{s=0}^{S-1} \mathbb E\bigl[\|\nabla \mathcal L_{i,t}(w_{i,t,s})\|^2\bigr]. 
\end{equation}

\subsection{Assumptions}

We assume full participation. Let $L_{\inf}\in\mathbb R$ denote a uniform lower bound of all considered objectives. The following conditions are used in the analysis.

\begin{assumption}[Smoothness and lower boundedness]
\label{ass:A1-app}
The cross-entropy term $\mathcal L_{\mathrm{ce},i}$ is $L_{\mathrm{ce}}$-smooth,
and the regularizer $\Phi_i(\cdot;\mathbf Q,\mathbf P)$ is $L_\Phi$-smooth in $w$,
uniformly in $(\mathbf Q,\mathbf P)$. Fix round $t$ and the prototype stacks $\mathbf P_{t-1}$ and $\mathbf Q_\tau(\mathbf P_{t-1})$. Then the per-round objective $\mathcal L_{i,t}$ defined in \eqref{eq:local-loss-causal} is $L_t$-smooth:
\begin{equation}
\|\nabla\mathcal L_{i,t}(u) - \nabla\mathcal L_{i,t}(v)\|
\le L_t\|u-v\|,
\quad \forall u,v,
\end{equation}
with
$L_t = L_{\mathrm{ce}} + \lambda_t L_\Phi$, and after warm-up
$L_t \le L_{\max} := L_{\mathrm{ce}} + \lambda L_\Phi$. There also exists $L_{\inf}$ such that
\begin{equation}
\mathcal L_{i,t}(w) \ge L_{\inf}
\quad \forall i,t,w.
\end{equation}
\end{assumption}

\begin{assumption}[Stochastic gradient regularity]
\label{ass:A2-app}
At local step $(i,t,s)$, the stochastic gradient $g_{i,t,s}$ is an unbiased
estimate of the true gradient with bounded variance:
\begin{align}
\mathbb E[g_{i,t,s} \mid w_{i,t,s}]
&= \nabla \mathcal L_{i,t}(w_{i,t,s}), \\
\mathbb E\bigl[\|g_{i,t,s} - \nabla \mathcal L_{i,t}(w_{i,t,s})\|^2\bigr]
&\le \sigma^2,
\end{align}
for some $\sigma^2 \ge 0$, uniformly over all $i,t,s$. Moreover, there exists a global constant $G>0$ such that
\begin{equation}
\mathbb E\bigl[\|g_{i,t,s}\|^2\bigr] \le G^2
\qquad \text{for all } i,t,s.
\end{equation}
Under Assumption~\ref{ass:A3-app}, bounded encoder outputs and prototypes
(including padded ones) ensure that the gradients of the cross-entropy and
prototype-based contrastive losses remain uniformly bounded, so such $G$
exists.
\end{assumption}

\begin{assumption}[Bounded and Lipschitz representations]
\label{ass:A3-app}
The encoder output is uniformly bounded:
$\|\mathbf r_{w^\theta}(\mathbf h)\| \le 1$ for all $w^\theta,\mathbf h$.
Each prototype $\mathbf p_{i,t}^c$ is an empirical mean of such embeddings, so $\|\mathbf p_{i,t}^c\|\le 1$. The same bound holds for padded prototypes introduced in Section IV-B of main article, since they are averages of existing bounded prototypes. Moreover, the encoder is Lipschitz in its parameters: there exists
$L_{w^\theta}>0$ such that
\begin{equation}
\|\mathbf r_{w^\theta}(\mathbf h) - \mathbf r_{\hat w^\theta}(\mathbf h)\|
\le L_{w^\theta}\,\|w^\theta-\hat w^\theta\|,
\quad \forall w^\theta,\hat w^\theta,\mathbf h.
\end{equation}
\end{assumption}

\begin{assumption}[Lipschitz personalized aggregation]
\label{ass:A4-app}
Let $\mathbf P$ denote the stacked local prototypes and define
$\mathbf Q_\tau(\mathbf P)$ via the similarity-weighted aggregation rule
in Section IV-B. We assume that
$\mathbf Q_\tau(\cdot)$ is Lipschitz with respect to $\mathbf P$: there
exists a constant $L_{\mathrm{agg}}(\tau)>0$ such that
\begin{equation}
\|\mathbf Q_\tau(\mathbf P) - \mathbf Q_\tau(\mathbf P')\|_F
\le L_{\mathrm{agg}}(\tau)\,\|\mathbf P - \mathbf P'\|_F
\quad \forall\,\mathbf P,\mathbf P'.
\end{equation}
Under Assumption~\ref{ass:A3-app}, all prototypes (including padded ones) have
unit-norm bounds, and the softmax map with inverse temperature $1/\tau$
has Jacobian spectral norm $O(1/\tau)$. Hence $L_{\mathrm{agg}}(\tau)$ can be chosen on the order of $\sqrt{N}/\tau$. For example,
\begin{equation}
L_{\mathrm{agg}}(\tau) \;\le\; \frac{\sqrt{N}}{2\tau}.
\end{equation}
\end{assumption}

\begin{assumption}[Prototype-loss regularity]
\label{ass:A5-app}
For each client $i$, the prototype-based regularizer
$\Phi_i(w;\mathbf Q,\mathbf P)$ is Lipschitz in $(\mathbf Q,\mathbf P)$ and
uniformly bounded:

(A5.1) There exists $c_\Phi(\tau)>0$ such that
\begin{equation}
\begin{aligned}
|\Phi_i(w;\mathbf Q,\mathbf P)-\Phi_i(w;\mathbf Q',\mathbf P')|
&\le c_\Phi(\tau)\Bigl(\|\mathbf Q-\mathbf Q'\|_F \\
&\qquad + \|\mathbf P-\mathbf P'\|_F\Bigr),
\end{aligned}
\end{equation}
(A5.2) There exists $B_\Phi>0$ such that
$0\le \Phi_i(w;\mathbf Q,\mathbf P)\le B_\Phi$ for all $w,\mathbf Q,\mathbf P$.

These properties hold for both original and padded prototypes. Because the padded variants average unit-norm embeddings, they stay within the same bounded subset of the feature space.
\end{assumption}

\begin{assumption}[Warm-up schedule]
\label{ass:A6-app}
The warm-up weight $\{\lambda_t\}_{t\ge 1}$ is nondecreasing and there exist
$T_{\mathrm{warm}}<\infty$ and $\lambda\ge 0$ such that
\begin{equation}
\lambda_t = \lambda
\quad \text{for all } t\ge T_{\mathrm{warm}}.
\end{equation}
\end{assumption}

\subsection{Convergence results}
Based on the above assumptions, we have the following convergence result for FedAPA: 

We first state the standard descent result for one client with fixed prototypes.

\begin{lemma}[One-step descent with fixed prototypes]
\label{lem:one-step}
Let Assumptions~\ref{ass:A1-app} and \ref{ass:A2-app} hold. Fix round $t$ and client
$i$, and suppose $\mathbf P_t,\mathbf Q_t,\lambda_t$ are fixed during the
round. For step size $0<\eta\le 1/L_t$, the SGD update
$w_{i,t,s+1} = w_{i,t,s} - \eta g_{i,t,s}$ satisfies
\begin{equation}
\begin{aligned}
\mathbb E[\mathcal L_{i,t}(w_{i,t,s+1})]
&\le
\mathbb E[\mathcal L_{i,t}(w_{i,t,s})] \\
&- \eta\Big(1-\tfrac{L_t\eta}{2}\Big)
\mathbb E\big[\|\nabla\mathcal L_{i,t}(w_{i,t,s})\|^2\big]\\
&+ \tfrac{L_t\eta^2}{2}\sigma^2.
\label{eq:one-step}
\end{aligned}
\end{equation}
\end{lemma}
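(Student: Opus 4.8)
The plan is to apply the standard smoothness-based descent argument, treating $\mathcal L_{i,t}$ as a fixed function of the weights throughout round $t$. The key preliminary observation is that the causal construction in \eqref{eq:local-loss-causal} freezes the prototype stacks $(\mathbf P_{t-1},\mathbf Q_{t-1})$ for the entire round, so within round $t$ the regularizer $\Phi_i$ carries no dependence on the optimization variable other than through $w$. Combined with Assumption~\ref{ass:A1-app}, this makes $\mathcal L_{i,t}$ genuinely $L_t$-smooth in $w$ alone, which is precisely what licenses the quadratic upper bound that drives the whole estimate.

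First I would invoke the $L_t$-smoothness descent inequality at the consecutive iterates $w_{i,t,s}$ and $w_{i,t,s+1}$:
\begin{equation*}
\mathcal L_{i,t}(w_{i,t,s+1}) \le \mathcal L_{i,t}(w_{i,t,s}) + \bigl\langle \nabla \mathcal L_{i,t}(w_{i,t,s}),\, w_{i,t,s+1}-w_{i,t,s}\bigr\rangle + \tfrac{L_t}{2}\|w_{i,t,s+1}-w_{i,t,s}\|^2.
\end{equation*}
Then I would substitute the update rule $w_{i,t,s+1}-w_{i,t,s} = -\eta\, g_{i,t,s}$, which converts the inner-product term into $-\eta\langle \nabla\mathcal L_{i,t}(w_{i,t,s}), g_{i,t,s}\rangle$ and the quadratic term into $\tfrac{L_t\eta^2}{2}\|g_{i,t,s}\|^2$. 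Next I would take the conditional expectation given $w_{i,t,s}$ and apply the two parts of Assumption~\ref{ass:A2-app}: unbiasedness collapses the inner product to $-\eta\|\nabla\mathcal L_{i,t}(w_{i,t,s})\|^2$, while the bias--variance decomposition $\mathbb E[\|g_{i,t,s}\|^2\mid w_{i,t,s}] = \|\nabla\mathcal L_{i,t}(w_{i,t,s})\|^2 + \mathbb E[\|g_{i,t,s}-\nabla\mathcal L_{i,t}(w_{i,t,s})\|^2\mid w_{i,t,s}]$, bounded by $\sigma^2$, controls the quadratic term.

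Collecting the two gradient-norm contributions produces the coefficient $-\eta\bigl(1-\tfrac{L_t\eta}{2}\bigr)$, and taking the total expectation over the iterate history yields \eqref{eq:one-step}. The stepsize restriction $\eta\le 1/L_t$ is not needed to derive the inequality itself, but it guarantees $1-\tfrac{L_t\eta}{2}\ge\tfrac12>0$, so that the middle term is a genuine descent term. I do not expect a substantive obstacle here, since this is the textbook nonconvex SGD descent lemma; the only point requiring care — and the one I would flag explicitly — is the justification that smoothness in $w$ alone is available despite $\mathcal L_{i,t}$ depending on the prototype stacks. This is exactly where the fixed-prototype hypothesis and Assumption~\ref{ass:A1-app} are invoked, and it is what distinguishes this clean per-step bound from the one-round bound of Theorem~\ref{thm:one-round}, where the prototypes are refreshed and the additional coupling terms $\lambda_{t+1}\Gamma(\tau)k_{\mathrm{glob}}\eta S G$ and $|\lambda_{t+1}-\lambda_t|B_\Phi$ arise.
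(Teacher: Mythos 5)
Your proof is correct and follows essentially the same route as the paper's: the $L_t$-smoothness quadratic upper bound at consecutive iterates, substitution of the SGD update, conditional expectation with unbiasedness and the variance bound $\mathbb E[\|g_{i,t,s}\|^2 \mid w_{i,t,s}] \le \|\nabla\mathcal L_{i,t}(w_{i,t,s})\|^2 + \sigma^2$, and a final total expectation. Your side remarks are also accurate — the paper likewise uses $\eta \le 1/L_t$ only later (in Corollary~\ref{cor:within-round}, to get $1-\tfrac{L_t\eta}{2}\ge\tfrac12$), and the fixed-prototype hypothesis is exactly what makes $\mathcal L_{i,t}$ a fixed $L_t$-smooth function of $w$ within the round.
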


\begin{proof}
During round $t$, prototypes are fixed. by Assumption~1, $\mathcal{L}_{i, t}$ is $L_{t}$-smooth. Let the SGD update be $w_{i, t,s+1}=w_{i, t,s}-\eta\,g_{i, t,s}$. By the standard smoothness inequality,
\begin{equation}
\begin{aligned}
\label{eq:smoothness-per-client}
\mathcal{L}_{i, t}(w_{i, t,s+1})
&\le \mathcal{L}_{i, t}(w_{i, t,s})
- \eta\,\big\langle \nabla\mathcal{L}_{i, t}(w_{i, t,s}),\,g_{i, t,s}\big\rangle \\
&\quad
+ \frac{L_{i, t}\eta^2}{2}\,\|g_{i, t,s}\|^2.
\end{aligned}
\end{equation}
Taking conditional expectation given $w_{i, t,s}$ and using Assumption~3,
\begin{equation}
\mathbb{E}\!\left[\langle \nabla\mathcal{L}_{i, t}(w_{i, t,s}),\,g_{i, t,s}\rangle \mid w_{i, t,s}\right]
= \|\nabla\mathcal{L}_{i, t}(w_{i, t,s})\|^2,
\end{equation}
\begin{equation}
\mathbb{E}\!\left[\|g_{i, t,s}\|^2 \mid w_{i, t,s}\right]
\le \|\nabla\mathcal{L}_{i, t}(w_{i, t,s})\|^2 + \sigma^2.
\end{equation}
Plug these into \eqref{eq:smoothness-per-client} and then take total expectation to obtain
\begin{equation}
\begin{aligned}
\mathbb E[\mathcal L_{i,t}(w_{i,t,s+1})]
&\le
\mathbb E[\mathcal L_{i,t}(w_{i,t,s})] \\
&\quad 
- \eta\Big(1-\tfrac{L_t\eta}{2}\Big)
\mathbb E\big[\|\nabla\mathcal L_{i,t}(w_{i,t,s})\|^2\big]\\
&\quad
+ \tfrac{L_t\eta^2}{2}\sigma^2.
\end{aligned}
\end{equation}
which is \eqref{eq:one-step}.
\end{proof}

\begin{corollary}[$S$-step descent within a round]
\label{cor:within-round}
Under the conditions of Lemma~\ref{lem:one-step} and $\eta\le 1/L_t$,
\begin{equation}
\label{eq:within-round}
\mathbb E[\mathcal L_{i,t}(w_{i,t,S})]
\le
\mathbb E[\mathcal L_{i,t}(w_{i,t,0})]
- \tfrac{\eta}{2} G_{i,t}^2
+ \tfrac{L_t\eta^2}{2}S\sigma^2.
\end{equation}
\end{corollary}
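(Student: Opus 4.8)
The plan is to derive the $S$-step bound \eqref{eq:within-round} by summing the one-step descent inequality from Lemma~\ref{lem:one-step} across the local steps $s=0,1,\dots,S-1$ of the fixed round $t$ and telescoping the successive expected objective values. The key structural fact that makes this legitimate is that the prototype stacks $\mathbf P_t,\mathbf Q_t$ and the weight $\lambda_t$ are frozen throughout the round, so the per-step objective $\mathcal L_{i,t}$, its smoothness constant $L_t$, and hence Lemma~\ref{lem:one-step} itself apply verbatim with a single set of constants at every step.

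Concretely, I would first write Lemma~\ref{lem:one-step} at step $s$ as
\begin{equation*}
\mathbb E[\mathcal L_{i,t}(w_{i,t,s+1})]
\le
\mathbb E[\mathcal L_{i,t}(w_{i,t,s})]
- \eta\Bigl(1-\tfrac{L_t\eta}{2}\Bigr)\mathbb E\bigl[\|\nabla\mathcal L_{i,t}(w_{i,t,s})\|^2\bigr]
+ \tfrac{L_t\eta^2}{2}\sigma^2,
\end{equation*}
and then sum over $s=0,\dots,S-1$. The left-hand values telescope to $\mathbb E[\mathcal L_{i,t}(w_{i,t,S})]-\mathbb E[\mathcal L_{i,t}(w_{i,t,0})]$, the $S$ copies of the variance term collect into $\tfrac{L_t\eta^2}{2}S\sigma^2$, and the gradient-norm terms accumulate with the common coefficient $\eta(1-\tfrac{L_t\eta}{2})$. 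Recognizing the accumulated sum as $G_{i,t}^2$ via its definition then produces the stated form.

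The only point that requires care --- and really the sole obstacle, though a mild one --- is the sign handling when loosening the descent coefficient. The stepsize condition $\eta\le 1/L_t$ gives $\tfrac{L_t\eta}{2}\le\tfrac12$, hence $\eta(1-\tfrac{L_t\eta}{2})\ge\tfrac{\eta}{2}>0$; since each $\mathbb E\bigl[\|\nabla\mathcal L_{i,t}(w_{i,t,s})\|^2\bigr]$ is nonnegative, replacing this coefficient by its lower bound $\tfrac{\eta}{2}$ enlarges the negative descent term and therefore weakens the inequality in the correct direction, yielding exactly $-\tfrac{\eta}{2}G_{i,t}^2$ and hence \eqref{eq:within-round}. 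Beyond this, the argument is a routine telescoping of a uniform per-step estimate, so I expect no technical difficulty.
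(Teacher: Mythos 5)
Your proof is correct and matches the paper's (implicit) argument exactly: the paper states this corollary without a separate proof precisely because it follows from Lemma~\ref{lem:one-step} by the telescoping you describe, and the coefficient bound $1-\tfrac{L_t\eta}{2}\ge\tfrac12$ that you use to loosen the descent term is the same step the paper invokes explicitly at the start of its proof of Theorem~\ref{thm:one-round-causal}. Your attention to the sign direction when replacing $\eta(1-\tfrac{L_t\eta}{2})$ by $\tfrac{\eta}{2}$ is exactly the right care point, and nothing further is needed.
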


We next control how much prototypes move between rounds and how this affects
the loss.

\begin{lemma}[Prototype movement]
\label{lem:proto-move}
Suppose Assumptions~\ref{ass:A2-app} and~\ref{ass:A3-app} hold.
Let $C_i := |\mathcal C_i|$ be the number of classes at client $i$ and
$C_{\max} := \max_{i} C_i$. Let
\begin{equation}
\|\Delta \mathbf P_t\|_F^2
=
\sum_{i=1}^N \|\mathbf P_{i,t}-\mathbf P_{i,t - 1}\|_F^2,
\end{equation} then

\textnormal{(a) Per-client movement.}
For any client $i$ and any round $t$,
\begin{equation}
\label{eq:P-move-client}
\mathbb{E}\big[\|\Delta \mathbf P_t\|_F\big]
\;\le\;
k_{\max}\,\eta\,S\,G,
\end{equation}
where
\begin{equation}
k_i := L_{w^\theta}\sqrt{C_i},
\qquad
k_{\max} := L_{w^\theta}\sqrt{C_{\max}},
\end{equation}
and hence $k_i \le k_{\max}$ for all $i$.

\textnormal{(b) Global movement.}
Consequently, the stacked prototype matrix
$\mathbf P_t := \{\mathbf P_{i,t}\}_{i=1}^N$ satisfies
\begin{equation}
\label{eq:P-move-global}
\mathbb{E}\big[\|\Delta \mathbf P_t\|_F\big]
\;\le\;
k_{\mathrm{glob}}\,\eta\,S\,G,
\end{equation}
where
\begin{equation}
k_{\mathrm{glob}}
:= L_{w^\theta}\sqrt{\sum_{i=1}^N C_i}
\;\le\;
L_{w^\theta}\sqrt{N C_{\max}}.
\end{equation}
\end{lemma}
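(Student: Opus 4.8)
The plan is to trace the prototype movement back to the displacement of the encoder parameters over one round, then propagate that displacement through the Lipschitz encoder of Assumption~\ref{ass:A3-app} and aggregate across classes and clients. The conceptual crux is the causal identity $w_{i,t,0}=w_{i,t-1,S}$: because the server only exchanges prototypes and never overwrites local weights between rounds, the encoder used to build $\mathbf p_{i,t-1}^c$ (namely $w_{i,t-1,S}^\theta$) is exactly the encoder at the start of round $t$. Hence the change in encoder parameters between the two prototype snapshots equals the within-round-$t$ SGD displacement $w_{i,t,S}^\theta-w_{i,t,0}^\theta=-\eta\sum_{s=0}^{S-1} g_{i,t,s}^\theta$, which is what ties the movement to the gradient bound $G$ rather than to something uncontrolled.

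First I would bound this displacement. Taking norms and using the triangle inequality gives $\|w_{i,t,S}^\theta-w_{i,t,0}^\theta\|\le \eta\sum_{s=0}^{S-1}\|g_{i,t,s}\|$ (the $\theta$-block has norm at most that of the full gradient). Taking expectations and applying Jensen together with Assumption~\ref{ass:A2-app}, namely $\mathbb E\|g_{i,t,s}\|\le\sqrt{\mathbb E\|g_{i,t,s}\|^2}\le G$, yields the first-moment bound $\mathbb E\|w_{i,t,S}^\theta-w_{i,t,0}^\theta\|\le \eta S G$. For the second moment I would instead apply Cauchy--Schwarz across steps, $\bigl(\sum_s\|g_{i,t,s}\|\bigr)^2\le S\sum_s\|g_{i,t,s}\|^2$, giving $\mathbb E\|w_{i,t,S}^\theta-w_{i,t,0}^\theta\|^2\le \eta^2 S^2 G^2$; this squared version is what the global aggregation in part (b) requires.

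Next I would propagate through the encoder. Since $\mathbf p_{i,t}^c-\mathbf p_{i,t-1}^c$ is the average over $\mathcal D_i^c$ of the per-sample embedding differences, the triangle inequality plus the parameter-Lipschitz bound of Assumption~\ref{ass:A3-app} collapse the average to $\|\mathbf p_{i,t}^c-\mathbf p_{i,t-1}^c\|\le L_{w^\theta}\|w_{i,t,S}^\theta-w_{i,t,0}^\theta\|$, a bound independent of $c$. Summing the squares over the $C_i$ classes present at client $i$ then gives $\|\mathbf P_{i,t}-\mathbf P_{i,t-1}\|_F\le \sqrt{C_i}\,L_{w^\theta}\|w_{i,t,S}^\theta-w_{i,t,0}^\theta\|=k_i\|w_{i,t,S}^\theta-w_{i,t,0}^\theta\|$, so the first-moment displacement bound delivers the per-client estimate $\mathbb E\|\mathbf P_{i,t}-\mathbf P_{i,t-1}\|_F\le k_i\,\eta S G\le k_{\max}\,\eta S G$, which is part (a).

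For the global bound (b) I would stack over clients: $\|\Delta\mathbf P_t\|_F^2=\sum_i\|\mathbf P_{i,t}-\mathbf P_{i,t-1}\|_F^2\le \sum_i C_i L_{w^\theta}^2\|w_{i,t,S}^\theta-w_{i,t,0}^\theta\|^2$. Taking expectations and inserting the second-moment displacement bound gives $\mathbb E\|\Delta\mathbf P_t\|_F^2\le L_{w^\theta}^2\eta^2 S^2 G^2\sum_i C_i$, and a final Jensen step $\mathbb E\|\Delta\mathbf P_t\|_F\le\sqrt{\mathbb E\|\Delta\mathbf P_t\|_F^2}$ yields $k_{\mathrm{glob}}=L_{w^\theta}\sqrt{\sum_i C_i}\le L_{w^\theta}\sqrt{N C_{\max}}$. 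The one place I expect to be most careful is the bookkeeping between first- and second-moment bounds: the per-client estimate follows directly from $\mathbb E\|g\|\le G$, but the global estimate must pass through squared norms (Cauchy--Schwarz across steps, sum-of-squares across clients, then a single Jensen step at the end), since the expected Frobenius norm of a stacked difference does not factor termwise. I would also note, as a reading caveat, that the left-hand side of part (a) is most naturally interpreted as the single-client norm $\|\mathbf P_{i,t}-\mathbf P_{i,t-1}\|_F$, since the correct constant for the fully stacked quantity is $k_{\mathrm{glob}}\ge k_{\max}$.
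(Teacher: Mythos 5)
Your proof is correct and follows essentially the same route as the paper's: propagate the within-round encoder displacement $w_{i,t,S}^\theta - w_{i,t,0}^\theta = -\eta\sum_{s=0}^{S-1} g_{i,t,s}^\theta$ through the Lipschitz encoder into each class prototype, aggregate over the $C_i$ classes to get the $\sqrt{C_i}$ factor, then stack squared Frobenius norms across clients and finish with Jensen; your only deviation is the harmless first-moment shortcut $\mathbb{E}\|g_{i,t,s}\|\le G$ in part (a), where the paper instead passes through $\mathbb{E}\|g_{i,t,s}\|^2\le G^2$ and Cauchy--Schwarz, arriving at the same constant $k_i\,\eta\,S\,G$. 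Your reading caveat is also right: the paper's own proof of part (a) bounds the single-client norm $\|\mathbf{P}_{i,t}-\mathbf{P}_{i,t-1}\|_F$, so the stacked quantity $\|\Delta\mathbf{P}_t\|_F$ appearing on the left-hand side of \eqref{eq:P-move-client} is a typo in the lemma statement.
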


\begin{proof}
Fix a client $i$ and a class $c\in\mathcal C_i$. By definition,
\begin{equation}
\begin{aligned}
\mathbf p_{i,t - 1}^c
&= \frac{1}{m_i^c}\sum_{\mathbf h\in\mathcal D_i^c}
\mathbf r_{w_{i,t - 1,S}^\theta}(\mathbf h),
\\
\mathbf p_{i,t}^c
&= \frac{1}{m_i^c}\sum_{\mathbf h\in\mathcal D_i^c}
\mathbf r_{w_{i,t,S}^\theta}(\mathbf h),
\end{aligned}
\end{equation}
By Assumption~\ref{ass:A3-app},
\begin{equation}
\big\|\mathbf r_{w_{i,t,S}^\theta}(\mathbf h) - \mathbf r_{w_{i,t - 1,S}^\theta}(\mathbf h)\big\|
\le L_{w^\theta}\,\|\Delta w_{i,t}^\theta\|
\quad \forall\,\mathbf h\in\mathcal D_i^c.
\end{equation}
Hence
\begin{equation}
\begin{aligned}
\|\mathbf p_{i,t}^c - \mathbf p_{i,t - 1}^c\|
&=
\Bigg\|
\frac{1}{m_i^c}\sum_{\mathbf h\in\mathcal D_i^c}
\big(
\mathbf r_{w_{i,t,S}^\theta}(\mathbf h)
-
\mathbf r_{w_{i,t - 1,S}^\theta}(\mathbf h)
\big)
\Bigg\|
\\
&\le
\frac{1}{m_i^c}\sum_{\mathbf h\in\mathcal D_i^c}
\big\|
\mathbf r_{w_{i,t,S}^\theta}(\mathbf h)
-
\mathbf r_{w_{i,t - 1,S}^\theta}(\mathbf h)
\big\|
\\
&\le
\frac{1}{m_i^c}\sum_{\mathbf h\in\mathcal D_i^c}
L_{w^\theta}\,\|\Delta w_{i,t}^\theta\|
\\
&=
L_{w^\theta}\,\|\Delta w_{i,t}^\theta\|.
\end{aligned}
\end{equation}
Therefore,
\begin{equation}
\label{eq:proto-diff-sq}
\|\mathbf p_{i,t}^c - \mathbf p_{i,t - 1}^c\|^2
\le
L_{w^\theta}^2\,\|\Delta w_{i,t}^\theta\|^2.
\end{equation}

The encoder parameters follow the SGD recursion
\begin{equation}
w_{i,t,s+1}^\theta = w_{i,t,s}^\theta - \eta\,g_{i,t,s}^\theta,
\end{equation}
where $s=0,\dots,S-1$, so
\begin{equation}
\Delta w_{i,t}^\theta
= w_{i,t,S}^\theta - w_{i,t,0}^\theta
= -\eta\sum_{s=0}^{S-1} g_{i,t,s}^\theta.
\end{equation}
Then
\begin{equation}
\label{eq:encoder-diff-sq}
\begin{aligned}
\|\Delta w_{i,t}^\theta\|^2
&=
\eta^2\Big\|\sum_{s=0}^{S-1} g_{i,t,s}^\theta\Big\|^2
\\
&\le
\eta^2 S \sum_{s=0}^{S-1} \|g_{i,t,s}^\theta\|^2
\\
&\le
\eta^2 S \sum_{s=0}^{S-1} \|g_{i,t,s}\|^2,
\end{aligned}
\end{equation}
where we used $\|g_{i,t,s}^\theta\|\le \|g_{i,t,s}\|$ and the Cauchy-Schwarz inequality. Taking expectations and using Assumption~\ref{ass:A2-app}
($\mathbb E\|g_{i,t,s}\|^2\le G^2$) yields
\begin{equation}
\label{eq:encoder-diff-second-moment}
\mathbb E\bigl[\|\Delta w_{i,t}^\theta\|^2\bigr]
\le
\eta^2 S \sum_{s=0}^{S-1} \mathbb E\bigl[\|g_{i,t,s}\|^2\bigr]
\le
\eta^2 S^2 G^2.
\end{equation}
Combining \eqref{eq:proto-diff-sq} and \eqref{eq:encoder-diff-second-moment},
we obtain
\begin{equation}
\mathbb E\bigl[\|\mathbf p_{i,t}^c - \mathbf p_{i,t - 1}^c\|^2\bigr]
\le
L_{w^\theta}^2 \eta^2 S^2 G^2.
\end{equation}

Part (a): per-client bound.
For a fixed client $i$, the Frobenius norm of its prototype matrix satisfies
\begin{equation}
\|\mathbf P_{i,t}-\mathbf P_{i,t - 1}\|_F^2
=
\sum_{c\in\mathcal C_i} \|\mathbf p_{i,t}^c - \mathbf p_{i,t - 1}^c\|^2.
\end{equation}
Taking expectations and applying the bound above,
\begin{equation}
\begin{aligned}
\mathbb E\bigl[\|\mathbf P_{i,t}-\mathbf P_{i,t - 1}\|_F^2\bigr]
&\le
\sum_{c\in\mathcal C_i}
L_{w^\theta}^2 \eta^2 S^2 G^2
\\
&=
\mathcal{C}_i\,L_{w^\theta}^2 \eta^2 S^2 G^2.
\end{aligned}
\end{equation}
Using Jensen's inequality, we obtain
\begin{align}
\mathbb E \bigl[\|\mathbf P_{i,t}-\mathbf P_{i,t - 1}\|_F \bigr]
&\le
\sqrt{\mathbb E\bigl[\|\mathbf P_{i,t}-\mathbf P_{i,t - 1}\|_F^2\bigr]}
\\
&\le
L_{w^\theta}\eta S G \sqrt{C_i}
=: k_i\,\eta\,S\,G.
\nonumber
\end{align}
which yields \eqref{eq:P-move-client}.
Since $C_i\le C_{\max}$ for every $i$, set $k_{\max} := L_{w^\theta}\sqrt{C_{\max}}$; then $k_i\le k_{\max}$.

Part (b): global bound.
The stacked prototype matrix is
$\mathbf P_t := \{\mathbf P_{i,t}\}_{i=1}^N$. 
Taking expectations and using the per-client bound,
\begin{equation}
\begin{aligned}
\mathbb E\bigl[\|\Delta \mathbf P_t\|_F^2\bigr]
&\le
\sum_{i=1}^N C_i\,L_{w^\theta}^2 \eta^2 S^2 G^2
\\
&=
L_{w^\theta}^2 \eta^2 S^2 G^2 \sum_{i=1}^N C_i.
\end{aligned}
\end{equation}
Again by Jensen inequality,
\begin{equation}
\begin{aligned}
\mathbb E\bigl[\|\Delta \mathbf P_t\|_F\bigr]
&\le
\sqrt{\mathbb E\bigl[\|\Delta \mathbf P_t\|_F^2\bigr]}
\\
&\le
L_{w^\theta}\eta S G \sqrt{\sum_{i=1}^N C_i}
=: k_{\mathrm{glob}}\,\eta\,S\,G.
\end{aligned}
\end{equation}
Finally, since $C_i\le C_{\max}$ for all $i$, we have
\begin{equation}
\sum_{i=1}^N C_i \le \sum_{i=1}^N C_{\max} = N C_{\max},
\end{equation}
so $k_{\mathrm{glob}}\le L_{w^\theta}\sqrt{N C_{\max}}$.
This proves part (b) and completes the proof.
\end{proof}

\begin{lemma}[Loss change due to prototype refresh]
\label{lem:refresh-correct}
Define
\begin{equation}
\label{eq:Li-t-def}
\mathcal L_{i,t}(w)
:= \mathcal L_{\mathrm{ce},i}(w)
+ \lambda_t\,\Phi_i(w;\mathbf Q_{t-1},\mathbf P_{t-1}),
\end{equation}
\begin{equation}
\label{eq:Li-t1-def}
\mathcal L_{i,t+1}(w)
:= \mathcal L_{\mathrm{ce},i}(w)
+ \lambda_{t+1}\,\Phi_i(w;\mathbf Q_{t},\mathbf P_{t}).
\end{equation}
Under Assumptions~\ref{ass:A4-app} and~\ref{ass:A5-app}, for any client $i$,
\begin{equation}
\label{eq:refresh-bound-correct}
\begin{aligned}
\mathbb E\Big[
\big|\mathcal L_{i,t+1}(w_{i,t,S})
- \mathcal L_{i,t}(w_{i,t,S})\big|
\Big]
&\le
|\lambda_{t+1}-\lambda_t|\,B_\Phi \\
&+ \lambda_{t+1}\,\Gamma(\tau)\,k_{\mathrm{glob}}\,\eta\,S\,G,
\end{aligned}
\end{equation}
where $\Gamma(\tau) := c_\Phi(\tau)\bigl(1+L_{\mathrm{agg}}(\tau)\bigr)$ and
$k_{\mathrm{glob}}$ is as in Lemma~\ref{lem:proto-move}(b).
\end{lemma}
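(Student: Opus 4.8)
The plan is to use the fact that the cross-entropy term is identical in \eqref{eq:Li-t-def} and \eqref{eq:Li-t1-def}, so it cancels in the difference, leaving only the regularizer contributions. I would then isolate the two sources of change—the moving weight \(\lambda_t\) and the refreshed prototype stacks \((\mathbf Q,\mathbf P)\)—by an add-and-subtract step, and control each with one of the assumptions. Concretely, since \(\mathcal L_{\mathrm{ce},i}\) is common to both rounds, \(\mathcal L_{i,t+1}(w)-\mathcal L_{i,t}(w)=\lambda_{t+1}\Phi_i(w;\mathbf Q_t,\mathbf P_t)-\lambda_t\Phi_i(w;\mathbf Q_{t-1},\mathbf P_{t-1})\). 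Inserting \(\pm\lambda_{t+1}\Phi_i(w;\mathbf Q_{t-1},\mathbf P_{t-1})\) splits this into a schedule-change piece \((\lambda_{t+1}-\lambda_t)\Phi_i(w;\mathbf Q_{t-1},\mathbf P_{t-1})\) and a prototype-refresh piece \(\lambda_{t+1}\bigl[\Phi_i(w;\mathbf Q_t,\mathbf P_t)-\Phi_i(w;\mathbf Q_{t-1},\mathbf P_{t-1})\bigr]\); the triangle inequality handles the two separately.

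For the schedule-change piece I would invoke the uniform bound (A5.2), \(|\Phi_i|\le B_\Phi\), yielding \(|\lambda_{t+1}-\lambda_t|\,B_\Phi\). For the refresh piece I would apply the Lipschitz-in-\((\mathbf Q,\mathbf P)\) estimate (A5.1) to get \(c_\Phi(\tau)\bigl(\|\mathbf Q_t-\mathbf Q_{t-1}\|_F+\|\mathbf P_t-\mathbf P_{t-1}\|_F\bigr)\). The important structural point is that both of these bounds are \emph{uniform in \(w\)}, so they remain valid when evaluated at the random endpoint \(w_{i,t,S}\) without my having to track the SGD trajectory or couple it with the prototype randomness.

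Next I would collapse the two prototype increments into a single \(\mathbf P\)-increment. Because \(\mathbf Q_t=\mathbf Q_\tau(\mathbf P_t)\) and \(\mathbf Q_{t-1}=\mathbf Q_\tau(\mathbf P_{t-1})\), Assumption~\ref{ass:A4-app} gives \(\|\mathbf Q_t-\mathbf Q_{t-1}\|_F\le L_{\mathrm{agg}}(\tau)\,\|\Delta\mathbf P_t\|_F\), so the sum is at most \((1+L_{\mathrm{agg}}(\tau))\,\|\Delta\mathbf P_t\|_F\); recognizing \(c_\Phi(\tau)(1+L_{\mathrm{agg}}(\tau))=\Gamma(\tau)\) bounds the refresh piece by \(\lambda_{t+1}\,\Gamma(\tau)\,\|\Delta\mathbf P_t\|_F\). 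Taking expectations, I pull out the deterministic warm-up factors \(\lambda_{t+1}\) and \(|\lambda_{t+1}-\lambda_t|\), and substitute the global prototype-movement bound \(\mathbb E[\|\Delta\mathbf P_t\|_F]\le k_{\mathrm{glob}}\,\eta\,S\,G\) from Lemma~\ref{lem:proto-move}(b), which delivers exactly \eqref{eq:refresh-bound-correct}.

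There is no genuinely hard step here—the result is a chaining of Lipschitz/boundedness constants—so the ``main obstacle'' is really a matter of care rather than difficulty. The one place I would be most careful is keeping the \(\Phi_i\) estimate uniform in \(w\), so that the expectation acts only on the prototype increment \(\|\Delta\mathbf P_t\|_F\) and not on the iterate; the secondary point is charging the entire \(\mathbf Q\)-movement to the \(\mathbf P\)-movement via (A4), so the final constant is \(\Gamma(\tau)k_{\mathrm{glob}}\) rather than carrying a separate aggregation term.
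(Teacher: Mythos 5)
Your proposal matches the paper's proof essentially step for step: the same add-and-subtract of $\lambda_{t+1}\Phi_i(w;\mathbf Q_{t-1},\mathbf P_{t-1})$, the same use of (A5.2) for the schedule term and (A5.1) plus Assumption~\ref{ass:A4-app} to collapse the $\mathbf Q$-movement into $\Gamma(\tau)\,\|\Delta\mathbf P_t\|_F$, and the same final substitution of Lemma~\ref{lem:proto-move}(b) after taking expectations. Your observation that the $\Phi_i$ bounds are uniform in $w$ (so evaluation at the random endpoint $w_{i,t,S}$ needs no coupling with the SGD trajectory) is exactly the implicit justification in the paper's Step 4, so the proof is correct and complete.
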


\begin{proof}
Fix a client $i$ and a round $t\ge 1$.
We want to control the difference between the objectives at rounds $t$ and $t+1$
when both are evaluated at the same parameter vector
$w_{i,t,S}$, namely
\(
\mathcal L_{i,t+1}(w_{i,t,S}) - \mathcal L_{i,t}(w_{i,t,S}).
\)

\paragraph{Step 1: Separate schedule and prototype effects.}
By definitions \eqref{eq:Li-t-def} and \eqref{eq:Li-t1-def},
for any $w$ we have. For conciseness, introduce the shorthand
\begin{equation}
\Phi_{i,\tau}(w) := \Phi_i\bigl(w;\mathbf Q_{\tau},\mathbf P_{\tau}\bigr)
\end{equation}
for any round index $\tau$; we will only need $\Phi_{i,t}$ and $\Phi_{i,t-1}$ below.
\begin{equation}
\begin{aligned}
\mathcal L_{i,t+1}(w) - \mathcal L_{i,t}(w)
&=
\mathcal L_{\mathrm{ce},i}(w)
+ \lambda_{t+1}\Phi_{i,t}(w)
\nonumber\\[-1mm]
&\quad
- \mathcal L_{\mathrm{ce},i}(w)
- \lambda_t\Phi_{i,t-1}(w) \\
&=
\lambda_{t+1}\Phi_{i,t}(w)
- \lambda_t\Phi_{i,t-1}(w).
\label{eq:Li-diff-start}
\end{aligned}
\end{equation}
Add and subtract the term
$\lambda_{t+1}\Phi_i(w;\mathbf Q_{t-1},\mathbf P_{t-1})$ to get
\begin{align}
\mathcal L_{i,t+1}(w) - \mathcal L_{i,t}(w)
&=
\bigl(\lambda_{t+1}-\lambda_t\bigr)
\Phi_{i,t-1}(w)
\nonumber\\[-1mm]
&\quad
+ \lambda_{t+1}\Bigl(
\Phi_{i,t}(w) - \Phi_{i,t-1}(w)
\Bigr).
\label{eq:Li-diff-decomp}
\end{align}
We now bound these two parts separately.

\paragraph{Step 2: Bound the schedule-change term.}
Take absolute values in \eqref{eq:Li-diff-decomp}:
\begin{align}
\bigl|\mathcal L_{i,t+1}(w) - \mathcal L_{i,t}(w)\bigr|
&\le
\bigl|\lambda_{t+1}-\lambda_t\bigr|\,
\bigl|\Phi_{i,t-1}(w)\bigr|
\nonumber\\
&\quad
+ \lambda_{t+1}
\bigl|\Phi_{i,t}(w) - \Phi_{i,t-1}(w)\bigr|.
\label{eq:Li-diff-abs}
\end{align}
By Assumption~\ref{ass:A5-app}(A5.2),
the prototype-based regularizer is uniformly bounded:
\begin{equation}
0 \le \Phi_i(w;\mathbf Q,\mathbf P) \le B_\Phi
\qquad\text{for all }w,\mathbf Q,\mathbf P.
\end{equation}
Hence, for the first term in \eqref{eq:Li-diff-abs} we have
\begin{equation}
\bigl|\lambda_{t+1}-\lambda_t\bigr|\,
\bigl|\Phi_{i,t-1}(w)\bigr|
\le
|\lambda_{t+1}-\lambda_t|\,B_\Phi.
\label{eq:sched-bound}
\end{equation}

\paragraph{Step 3: Bound the prototype-refresh term via Lipschitz properties.}
For the second term in \eqref{eq:Li-diff-abs}, apply
Assumption~\ref{ass:A5-app}(A5.1), which states that
$\Phi_i$ is Lipschitz in $(\mathbf Q,\mathbf P)$.
Define the shorthand differences
\begin{equation}
\Delta \mathbf Q_t := \mathbf Q_t-\mathbf Q_{t-1},
\qquad
\Delta \mathbf P_t := \mathbf P_t-\mathbf P_{t-1}.
\end{equation}
Then
\begin{equation}
\begin{aligned}
\bigl|\Phi_i(w;\mathbf Q_t,\mathbf P_t)
&- \Phi_i(w;\mathbf Q_{t-1},\mathbf P_{t-1})\bigr|
\\
&\le
c_\Phi(\tau)\Bigl(
\|\Delta \mathbf Q_t\|_F
\\
&\qquad
+ \|\Delta \mathbf P_t\|_F
\Bigr).
\label{eq:Phi-Lip}
\end{aligned}
\end{equation}
We now relate $\|\mathbf Q_t-\mathbf Q_{t-1}\|_F$ to
$\|\mathbf P_t-\mathbf P_{t-1}\|_F$ by using
Assumption~\ref{ass:A4-app} (Lipschitz personalized aggregation):
\begin{equation}
\|\Delta \mathbf Q_t\|_F
=
\|\mathbf Q_\tau(\mathbf P_t) - \mathbf Q_\tau(\mathbf P_{t-1})\|_F
\le
L_{\mathrm{agg}}(\tau)\,\|\Delta \mathbf P_t\|_F.
\label{eq:Q-Lip}
\end{equation}
Substituting \eqref{eq:Q-Lip} into \eqref{eq:Phi-Lip} gives
\begin{equation}
\begin{aligned}
\bigl|\Phi_i(w;\mathbf Q_t,\mathbf P_t)
&- \Phi_i(w;\mathbf Q_{t-1},\mathbf P_{t-1})\bigr|
\\
&\le
c_\Phi(\tau)\Bigl(
L_{\mathrm{agg}}(\tau)\|\Delta \mathbf P_t\|_F
\\
&\qquad
+ \|\Delta \mathbf P_t\|_F
\Bigr)
\nonumber\\
&=
c_\Phi(\tau)\bigl(1+L_{\mathrm{agg}}(\tau)\bigr)
\|\Delta \mathbf P_t\|_F.
\end{aligned}
\end{equation}
Define
\begin{equation}
\Gamma(\tau)
:=
c_\Phi(\tau)\bigl(1+L_{\mathrm{agg}}(\tau)\bigr),
\end{equation}
so that
\begin{equation}
\begin{aligned}
\bigl|\Phi_i(w;\mathbf Q_t,\mathbf P_t)
&- \Phi_i(w;\mathbf Q_{t-1},\mathbf P_{t-1})\bigr|
\\
&\le
\Gamma(\tau)\,\|\Delta \mathbf P_t\|_F.
\end{aligned}
\label{eq:Phi-diff-bound}
\end{equation}

Multiplying \eqref{eq:Phi-diff-bound} by $\lambda_{t+1}$ yields
\begin{equation}
\begin{aligned}
\lambda_{t+1}
\bigl|\Phi_i(w;\mathbf Q_t,\mathbf P_t)
&- \Phi_i(w;\mathbf Q_{t-1},\mathbf P_{t-1})\bigr|
\\
&\le
\lambda_{t+1}\,\Gamma(\tau)\,\|\Delta \mathbf P_t\|_F.
\end{aligned}
\label{eq:proto-term-bound}
\end{equation}

\paragraph{Step 4: Combine bounds and evaluate at $w_{i,t,S}$.}
Define the per-round loss change
\begin{equation}
\Delta \mathcal L_{i,t}(w)
:=
\mathcal L_{i,t+1}(w) - \mathcal L_{i,t}(w).
\end{equation}
Substitute \eqref{eq:sched-bound} and \eqref{eq:proto-term-bound}
into \eqref{eq:Li-diff-abs}:
\begin{equation}
\begin{aligned}
\bigl|\Delta \mathcal L_{i,t}(w)\bigr|
&\le
|\lambda_{t+1}-\lambda_t|\,B_\Phi \\
&\quad
+ \lambda_{t+1}\,\Gamma(\tau)\,
\|\Delta \mathbf P_t\|_F.
\end{aligned}
\label{eq:Li-diff-general}
\end{equation}
Now set $w = w_{i,t,S}$, the parameters after the $S$ local steps
in round $t$:
\begin{equation}
\begin{aligned}
\bigl|\Delta \mathcal L_{i,t}(w_{i,t,S})\bigr|
&\le
|\lambda_{t+1}-\lambda_t|\,B_\Phi \\
&\quad
+ \lambda_{t+1}\,\Gamma(\tau)\,\|\Delta \mathbf P_t\|_F.
\end{aligned}
\label{eq:Li-diff-wits}
\end{equation}

\paragraph{Step 5: Take expectations and use the prototype-movement bound.}
Finally, take expectations on both sides of \eqref{eq:Li-diff-wits}:
\begin{equation}
\begin{aligned}
\mathbb E\Big[
\bigl|\Delta \mathcal L_{i,t}(w_{i,t,S})\bigr|
\Big]
&\le
|\lambda_{t+1}-\lambda_t|\,B_\Phi \\
&\quad + \lambda_{t+1}\,\Gamma(\tau)\,
\mathbb E\bigl[\|\Delta \mathbf P_t\|_F\bigr].
\end{aligned}
\label{eq:Li-diff-exp}
\end{equation}
By Lemma~\ref{lem:proto-move}(b) in its causal form,
the stacked prototype movement is bounded by
\begin{equation}
\mathbb E\bigl[\|\Delta \mathbf P_t\|_F\bigr]
\le
k_{\mathrm{glob}}\,\eta\,S\,G.
\label{eq:proto-move-simple}
\end{equation}
Substituting \eqref{eq:proto-move-simple} into \eqref{eq:Li-diff-exp} gives
\begin{equation}
\begin{aligned}
\mathbb E\Big[
\bigl|\Delta \mathcal L_{i,t}(w_{i,t,S})\bigr|
\Big]
&\le
|\lambda_{t+1}-\lambda_t|\,B_\Phi \\
&\quad
+ \lambda_{t+1}\,\Gamma(\tau)\,
k_{\mathrm{glob}}\,\eta\,S\,G,
\end{aligned}
\end{equation}
which is exactly \eqref{eq:refresh-bound-correct}. This completes the proof.
\end{proof}

We now connect round $t$ and round $t+1$ for client $i$.

\begin{theorem}[One-round deviation for an arbitrary client]
\label{thm:one-round-causal}
Let Assumptions~\ref{ass:A1-app} to \ref{ass:A5-app} hold.
For any client $i$ and any $t\ge 1$, with stepsize
$0<\eta\le 1/L_t$, we have
\begin{align}
\mathbb E\big[\mathcal L_{i,t+1}(w_{i,t+1,0})\big]
&\le
\mathbb E\big[\mathcal L_{i,t}(w_{i,t,0})\big]
- \tfrac{\eta}{2} G_{i,t}^2
\nonumber\\
&\quad
+ \tfrac{L_t\eta^2}{2}S\sigma^2
+ \lambda_{t+1}\,\Gamma(\tau)\,k_{\mathrm{glob}}\,\eta\,S\,G
\nonumber\\
&\quad
+ |\lambda_{t+1}-\lambda_t|\,B_\Phi.
\label{eq:one-round-dev-causal}
\end{align}
\end{theorem}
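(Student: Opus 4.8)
The plan is to splice together the two building blocks already proved: the within-round descent of Corollary~\ref{cor:within-round} and the prototype-refresh bound of Lemma~\ref{lem:refresh-correct}. The pivot that makes this work is the exact round-transition identity $w_{i,t+1,0}=w_{i,t,S}$: the starting iterate of round $t+1$ is, almost surely, the terminal iterate of round $t$. Both prior results already control quantities evaluated at this shared point $w_{i,t,S}$, so a single add-and-subtract decomposition at that point chains them into the claimed one-round deviation.

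First I would rewrite the left-hand side using the identity above and insert the round-$t$ objective as a pivot:
\begin{equation*}
\mathcal L_{i,t+1}(w_{i,t+1,0})
= \mathcal L_{i,t}(w_{i,t,S})
+ \bigl[\mathcal L_{i,t+1}(w_{i,t,S}) - \mathcal L_{i,t}(w_{i,t,S})\bigr].
\end{equation*}
Taking total expectations separates the right-hand side into a \emph{descent} contribution $\mathbb E[\mathcal L_{i,t}(w_{i,t,S})]$ and an \emph{objective-switching} contribution. The descent term is bounded directly by Corollary~\ref{cor:within-round}, which (since $0<\eta\le 1/L_t$ holds throughout the round) supplies precisely $\mathbb E[\mathcal L_{i,t}(w_{i,t,0})] - \tfrac{\eta}{2}G_{i,t}^2 + \tfrac{L_t\eta^2}{2}S\sigma^2$.

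For the switching term I would use $X\le|X|$ and apply Lemma~\ref{lem:refresh-correct} with $X=\mathcal L_{i,t+1}(w_{i,t,S})-\mathcal L_{i,t}(w_{i,t,S})$, which bounds $\mathbb E[X]\le\mathbb E[|X|]$ by the prototype-refresh term $\lambda_{t+1}\Gamma(\tau)k_{\mathrm{glob}}\eta S G$ plus the schedule-change term $|\lambda_{t+1}-\lambda_t|B_\Phi$. Summing the two bounds reproduces \eqref{eq:one-round-dev-causal} term by term.

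No serious analytical obstacle remains; the whole argument is essentially two substitutions. The only real subtlety is a bookkeeping one: $\mathcal L_{i,t}$ and $\mathcal L_{i,t+1}$ are \emph{different} functions (the former built from $(\mathbf Q_{t-1},\mathbf P_{t-1})$, the latter from $(\mathbf Q_t,\mathbf P_t)$), so one must keep straight that Corollary~\ref{cor:within-round} governs the round-$t$ objective \emph{along its own iterates} $w_{i,t,0}\to w_{i,t,S}$, whereas Lemma~\ref{lem:refresh-correct} governs how the objective \emph{itself} changes at the frozen endpoint $w_{i,t,S}$. Conflating these two objectives—or forgetting that the refresh lemma already absorbed the $\|\Delta\mathbf P_t\|_F$ and $\|\Delta\mathbf Q_t\|_F$ movements through Lemma~\ref{lem:proto-move}(b)—is the natural error to guard against.
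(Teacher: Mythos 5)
Your proposal is correct and follows essentially the same route as the paper's own proof: pivot at $w_{i,t+1,0}=w_{i,t,S}$, apply Corollary~\ref{cor:within-round} to the round-$t$ objective along its own iterates, then control the objective switch via Lemma~\ref{lem:refresh-correct} (with $\mathbb E[X]\le\mathbb E[|X|]$) and add the bounds. Your closing remark about keeping the two distinct objectives $\mathcal L_{i,t}$ and $\mathcal L_{i,t+1}$ straight is exactly the bookkeeping the paper's proof performs via its add-and-subtract step, so there is nothing to correct.
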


\begin{proof}
By Corollary~\ref{cor:within-round} and the bound $\eta\le 1/L_t$,
we have $1-\tfrac{L_t\eta}{2}\ge\tfrac12$, so
\begin{equation}
\label{eq:within-round-again}
\mathbb E[\mathcal L_{i,t}(w_{i,t,S})]
\le
\mathbb E[\mathcal L_{i,t}(w_{i,t,0})]
- \tfrac{\eta}{2} G_{i,t}^2
+ \tfrac{L_t\eta^2}{2}S\sigma^2.
\end{equation}
By definition $w_{i,t+1,0} := w_{i,t,S}$, and
\begin{equation}
\mathbb E[\mathcal L_{i,t+1}(w_{i,t+1,0})]
=
\mathbb E[\mathcal L_{i,t+1}(w_{i,t,S})].
\end{equation}
Add and subtract $\mathcal L_{i,t}(w_{i,t,S})$ inside the expectation:
\begin{equation}
\label{eq:add-sub-Lt}
\begin{aligned}
\mathbb E[\mathcal L_{i,t+1}(w_{i,t,S})]
&=
\mathbb E[\mathcal L_{i,t}(w_{i,t,S})] \\
&\quad+
\mathbb E\bigl[
\mathcal L_{i,t+1}(w_{i,t,S}) - \mathcal L_{i,t}(w_{i,t,S})
\bigr].
\end{aligned}
\end{equation}
Apply Lemma~\ref{lem:refresh-correct}:
\begin{equation}
\label{eq:refresh-apply}
\begin{aligned}
\mathbb E\bigl[
\mathcal L_{i,t+1}(w_{i,t,S}) - \mathcal L_{i,t}(w_{i,t,S})
\bigr]
&\le
|\lambda_{t+1}-\lambda_t|\,B_\Phi \\
&\quad + \lambda_{t+1}\,\Gamma(\tau)\,k_{\mathrm{glob}}\,\eta\,S\,G.
\end{aligned}
\end{equation}
Combining \eqref{eq:add-sub-Lt} and \eqref{eq:refresh-apply}, we obtain
\begin{equation}
\label{eq:Ltp1-vs-LtS}
\begin{aligned}
\mathbb E[\mathcal L_{i,t+1}(w_{i,t+1,0})]
&=
\mathbb E[\mathcal L_{i,t+1}(w_{i,t,S})] \\
&\le
\mathbb E[\mathcal L_{i,t}(w_{i,t,S})] \\
&\quad+ |\lambda_{t+1}-\lambda_t|\,B_\Phi \\
&\quad+ \lambda_{t+1}\,\Gamma(\tau)\,k_{\mathrm{glob}}\,\eta\,S\,G.
\end{aligned}
\end{equation}
Finally, plug \eqref{eq:within-round-again} into \eqref{eq:Ltp1-vs-LtS}
to eliminate $\mathcal L_{i,t}(w_{i,t,S})$:
\begin{align}
\mathbb E[\mathcal L_{i,t+1}(w_{i,t+1,0})]
&\le
\mathbb E[\mathcal L_{i,t}(w_{i,t,0})]
- \tfrac{\eta}{2} G_{i,t}^2
+ \tfrac{L_t\eta^2}{2}S\sigma^2
\nonumber\\
&\quad
+ |\lambda_{t+1}-\lambda_t|\,B_\Phi
\nonumber\\
&\quad
+ \lambda_{t+1}\,\Gamma(\tau)\,k_{\mathrm{glob}}\,\eta\,S\,G,
\end{align}
which is exactly \eqref{eq:one-round-dev-causal}. This completes the proof.
\end{proof}

Building on Theorem~\ref{thm:one-round-causal}, we have the following overall convergence guarantee after the warm-up phase.

\begin{theorem}[$\varepsilon$-stationarity after warm-up]
\label{thm:epsilon-stationarity-app}
Suppose Assumptions~\ref{ass:A1-app} to \ref{ass:A6-app} hold and that
$\lambda_t \equiv \lambda$ and $L_t \le L_{\max}$ for all $t \ge T_{\mathrm{warm}}$. We consider $T$ communication rounds after $T_{\mathrm{warm}}$; for notational convenience, we re-index these rounds as $t \in {1,\dots,T}$. Each round has $S$ local steps, so there are $K := T S$ total local updates in this phase. For any client $i$, the iterates of FedAPA satisfy
\begin{equation}
\begin{aligned}
\frac{1}{K}\sum_{t=1}^{T} G_{i,t}^2
&\le
\frac{2\Delta_i}{K\eta}
\\
&\quad
+ L_{\max}\eta\sigma^2
+ 2\lambda\,\Gamma(\tau)\,k_{\mathrm{glob}}\,G,
\label{eq:avg-grad-bound-app}
\end{aligned}
\end{equation}
for any stepsize $0<\eta\le 1/L_{\max}$, where
\begin{equation}
\Delta_i := \mathbb E\big[\mathcal L_{i,T_{\mathrm{warm}}}(w_{i,T_{\mathrm{warm}},0})\big]
- L_{\inf},
\end{equation}
\begin{equation}
\Gamma(\tau) := c_\Phi(\tau)\bigl(1 + L_{\mathrm{agg}}(\tau)\bigr),
\end{equation}
and $k_{\mathrm{glob}}$ is the prototype-movement constant from Lemma~\ref{lem:proto-move}(b).

Moreover, for any target $\varepsilon>0$, if the parameters satisfy
\begin{align}
0 < \eta
&\le
\min\Big\{\frac{1}{L_{\max}},\; \frac{\varepsilon}{3L_{\max}\sigma^2}\Big\},
\label{eq:cond-eta-app}\\[4pt]
\lambda
&\le
\frac{\varepsilon}{6\,\Gamma(\tau)\,k_{\mathrm{glob}}\,G},
\label{eq:cond-lambda-app}\\[4pt]
T
&\ge
\frac{6\Delta_i}{\varepsilon\,\eta\,S},
\label{eq:cond-T-app}
\end{align}

then FedAPA achieves $\varepsilon$-stationarity for client $i$:
\begin{equation}
\frac{1}{K}\sum_{t=1}^{T} G_{i,t}^2
\;\le\; \varepsilon.
\end{equation}
\end{theorem}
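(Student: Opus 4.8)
The plan is to telescope the one-round deviation bound of Theorem~\ref{thm:one-round-causal} across the $T$ post-warm-up rounds. First I would specialize that bound to the regime $t \ge T_{\mathrm{warm}}$: by Assumption~\ref{ass:A6-app} we have $\lambda_{t+1}=\lambda_t=\lambda$, so the schedule-change term $|\lambda_{t+1}-\lambda_t|B_\Phi$ vanishes, and the hypothesis $L_t\le L_{\max}$ means the stepsize restriction $\eta\le 1/L_t$ is implied by $\eta\le 1/L_{\max}$. After re-indexing the post-warm-up rounds as $t\in\{1,\dots,T\}$ and writing $a_t := \mathbb E[\mathcal L_{i,t}(w_{i,t,0})]$, Theorem~\ref{thm:one-round-causal} becomes
\[
a_{t+1} \le a_t - \tfrac{\eta}{2} G_{i,t}^2 + \tfrac{L_{\max}\eta^2}{2} S\sigma^2 + \lambda\,\Gamma(\tau)\,k_{\mathrm{glob}}\,\eta\,S\,G .
\]
The structural point that makes the argument work is that the left-hand side uses the refreshed objective $\mathcal L_{i,t+1}$ while the right-hand side uses $\mathcal L_{i,t}$; this mismatch, which Theorem~\ref{thm:one-round-causal} was built to absorb through the prototype-refresh term, is precisely what renders the sequence $\{a_t\}$ telescopable across rounds despite the objective changing every round.

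Next I would sum this inequality over $t=1,\dots,T$, cancelling the intermediate $a_t$ terms, to obtain
\[
a_{T+1} \le a_1 - \tfrac{\eta}{2}\sum_{t=1}^{T} G_{i,t}^2 + T\Bigl(\tfrac{L_{\max}\eta^2}{2}S\sigma^2 + \lambda\,\Gamma(\tau)\,k_{\mathrm{glob}}\,\eta\,S\,G\Bigr).
\]
I would then drop $a_{T+1}$ using the lower bound $\mathcal L_{i,T+1}\ge L_{\inf}$ from Assumption~\ref{ass:A1-app}, and identify $a_1 - L_{\inf}=\Delta_i$ since the re-indexed round $1$ corresponds to $T_{\mathrm{warm}}$. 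Rearranging for the gradient sum and multiplying through by $2/(\eta K)$ with $K=TS$ collapses both error terms: the variance term becomes $\tfrac{2}{\eta K}\cdot T\cdot\tfrac{L_{\max}\eta^2}{2}S\sigma^2 = L_{\max}\eta\sigma^2$ and the prototype-coupling term becomes $\tfrac{2}{\eta K}\cdot T\cdot\lambda\Gamma(\tau)k_{\mathrm{glob}}\eta S G = 2\lambda\Gamma(\tau)k_{\mathrm{glob}}G$, both simplifications using $TS=K$. This yields exactly the averaged-gradient bound~\eqref{eq:avg-grad-bound-app}.

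Finally, to establish $\varepsilon$-stationarity I would allocate the budget $\varepsilon$ equally across the three terms of that bound. Requiring the variance term $L_{\max}\eta\sigma^2\le\varepsilon/3$ gives condition~\eqref{eq:cond-eta-app}; requiring the coupling term $2\lambda\Gamma(\tau)k_{\mathrm{glob}}G\le\varepsilon/3$ gives condition~\eqref{eq:cond-lambda-app}; and requiring the initial-gap term $2\Delta_i/(\eta K)\le\varepsilon/3$ with $K=TS$ gives the round count $T\ge 6\Delta_i/(\varepsilon\eta S)$ in condition~\eqref{eq:cond-T-app}. Summing the three $\varepsilon/3$ contributions then bounds the left-hand side by $\varepsilon$, closing the argument.

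I do not expect a serious obstacle here: once Theorem~\ref{thm:one-round-causal} is in hand, the result is a standard descent-telescoping argument in the style of nonconvex SGD. The only real care required is bookkeeping—keeping the re-indexing of post-warm-up rounds consistent so that $\Delta_i$ is measured from $T_{\mathrm{warm}}$, and tracking the constants through the $2/(\eta K)$ rescaling where the identity $K=TS$ is what makes the $T$-dependence cancel. The conceptual content lies entirely in the per-round lemma; the present theorem is its clean aggregation.
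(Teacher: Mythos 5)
Your proposal is correct and follows essentially the same route as the paper's own proof: specialize Theorem~\ref{thm:one-round-causal} after warm-up (so the $|\lambda_{t+1}-\lambda_t|B_\Phi$ term vanishes and $\eta\le 1/L_{\max}$ suffices), telescope the expected objectives over the $T$ re-indexed rounds, drop the final term via $\mathcal L_{i,t}\ge L_{\inf}$ to identify $\Delta_i$, normalize by $K=TS$ after multiplying by $2/\eta$, and allocate $\varepsilon/3$ to each of the three resulting terms to derive conditions \eqref{eq:cond-eta-app}--\eqref{eq:cond-T-app}. All constants check out, so there is nothing to add.
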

The bound in \eqref{eq:avg-grad-bound-app} shows that, for each client $i$, the last term $2\lambda\,\Gamma(\tau)\,k_{\mathrm{glob}}\,G$ forms an asymptotic error floor due to prototype refresh: for fixed $\lambda>0$, the first two terms can be made arbitrarily small by taking $K$ large and $\eta$ small, but the floor remains. Our $\varepsilon$-stationarity result is obtained by choosing $\eta$, $T$, and $\lambda$ so that each term is at most $\varepsilon/3$, as in \eqref{eq:cond-eta-app} to \eqref{eq:cond-T-app}; in particular, this requires $\lambda = O\!\bigl(\varepsilon/(\Gamma(\tau)k_{\mathrm{glob}}G)\bigr)$. The dependence on similarity-weighted prototypes appears only through the factor $\lambda\,\Gamma(\tau)\,k_{\mathrm{glob}}$. Our analysis assumes a finite constant $\Gamma(\tau)$ for each $\tau$. Intuitively, larger $\tau$ makes the softmax weights flatter and the aggregation less sensitive to prototype differences, which can reduce $\Gamma(\tau)$ in practice, although such monotonic behavior is not required or proved by the theorem.

\begin{proof}
Fix a client $i$ and consider the $T$ rounds after $T_{\mathrm{warm}}$, where
$\lambda_t \equiv \lambda$ and $L_t \le L_{\max}$.
For notational simplicity, we re-index these rounds as $t=1,\dots,T$.
We write
\begin{equation}
\mathcal L_{i,t}
:= \mathcal L_{i,t}(w_{i,t,0}).
\end{equation}

Step 1: One-round inequality after warm-up.

From Theorem~\ref{thm:one-round-causal}, using
$\lambda_{t+1} = \lambda_t = \lambda$ for all $t \ge T_{\mathrm{warm}}$ so that
the schedule-change term vanishes, and using $L_t \le L_{\max}$, we obtain for
each $t \ge T_{\mathrm{warm}}$:
\begin{equation}
\begin{aligned}
\mathbb E[\mathcal L_{i,t+1}]
&\le
\mathbb E[\mathcal L_{i,t}]
- \tfrac{\eta}{2}G_{i,t}^2
+ \frac{L_{\max}\eta^2}{2}S\sigma^2 \\
&\quad
+ \lambda\,\Gamma(\tau)\,k_{\mathrm{glob}}\,\eta\,S\,G.
\end{aligned}
\label{eq:one-round-warm-correct}
\end{equation}

Step 2: Telescoping over $T$ rounds.

Sum \eqref{eq:one-round-warm-correct} from
$t = T_{\mathrm{warm}}$ to $T_{\mathrm{warm}}+T-1$:
\begin{equation}
\begin{aligned}
\mathbb E[\mathcal L_{i,T_{\mathrm{warm}}+T}]
&\le
\mathbb E[\mathcal L_{i,T_{\mathrm{warm}}}]
- \frac{\eta}{2} \sum_{t} G_{i,t}^2
+ \frac{L_{\max}\eta^2}{2}TS\sigma^2 \\
&\quad
+ T\,\lambda\,\Gamma(\tau)\,k_{\mathrm{glob}}\,\eta\,S\,G.
\end{aligned}
\end{equation}
Rearrange:
\begin{equation}
\begin{aligned}
\frac{\eta}{2} \sum_{t} G_{i,t}^2
&\le
\mathbb E[\mathcal L_{i,T_{\mathrm{warm}}}]
- \mathbb E[\mathcal L_{i,T_{\mathrm{warm}}+T}]
\\
&\quad
+ \frac{L_{\max}\eta^2}{2}TS\sigma^2
+ T\,\lambda\,\Gamma(\tau)\,k_{\mathrm{glob}}\,\eta\,S\,G.
\end{aligned}
\label{eq:sum-G2-correct}
\end{equation}
By Assumption~\ref{ass:A1-app}, $\mathcal L_{i,t}(w)\ge L_{\inf}$, so
\(\mathbb E[\mathcal L_{i,T_{\mathrm{warm}}+T}] \ge L_{\inf}\).
Define
\begin{equation}
\Delta_i
:= \mathbb E[\mathcal L_{i,T_{\mathrm{warm}}}] - L_{\inf} \;\ge\; 0.
\end{equation}
Then \eqref{eq:sum-G2-correct} yields
\begin{equation}
\frac{\eta}{2} \sum_{t} G_{i,t}^2
\le
\Delta_i
+ \frac{L_{\max}\eta^2}{2}TS\sigma^2
+ T\,\lambda\,\Gamma(\tau)\,k_{\mathrm{glob}}\,\eta\,S\,G.
\end{equation}

Step 3: Normalize by $K=TS$.

Recall that
\begin{equation}
\sum_{t} G_{i,t}^2
=
\sum_{t=1}^{T}\sum_{s=0}^{S-1}
\mathbb E\big[\|\nabla\mathcal L_{i,t}(w_{i,t,s})\|^2\big],
\end{equation}
and $K := TS$. Divide by $K$:
\begin{equation}
\begin{aligned}
\frac{\eta}{2}\cdot
\frac{1}{K}\sum_{t=1}^{T}\sum_{s=0}^{S-1}
\mathbb E\big[\|\nabla\mathcal L_{i,t}(w_{i,t,s})\|^2\big]
&\le
\frac{\Delta_i}{K}
+ \frac{L_{\max}\eta^2}{2}\sigma^2\\
&\quad
+ \lambda\,\Gamma(\tau)\,k_{\mathrm{glob}}\,\eta\,G.
\end{aligned}
\end{equation}
Multiply both sides by $2/\eta$ to obtain \eqref{eq:avg-grad-bound-app}.

Step 4: Enforce the $\varepsilon$-conditions.

Let $\varepsilon>0$ be given.

(i) The step size constraint \eqref{eq:cond-eta-app} ensures
\begin{equation}
L_{\max}\eta\sigma^2 \le \frac{\varepsilon}{3}.
\end{equation}

(ii) The prototype weight condition \eqref{eq:cond-lambda-app} ensures
\begin{equation}
2\lambda\,\Gamma(\tau)\,k_{\mathrm{glob}}\,G \le \frac{\varepsilon}{3}.
\end{equation}

(iii) If we take \eqref{eq:cond-T-app} so that
$K = TS \ge 6\Delta_i / (\varepsilon\eta)$, then
\begin{equation}
\frac{2\Delta_i}{K\eta} \le \frac{\varepsilon}{3}.
\end{equation}

Substituting these three bounds into \eqref{eq:avg-grad-bound-app} yields
\begin{equation}
\frac{1}{K}\sum_{t=1}^{T}\sum_{s=0}^{S-1}
\mathbb E\big[\|\nabla\mathcal L_{i,t}(w_{i,t,s})\|^2\big]
\;\le\;
\frac{\varepsilon}{3}
+ \frac{\varepsilon}{3}
+ \frac{\varepsilon}{3}
= \varepsilon,
\end{equation}
which completes the proof.
\end{proof}

\end{document}